\def\eqref#1{equation~\ref{#1}}
\def\1{\bm{1}}
\DeclareMathAlphabet{\mathsfit}{\encodingdefault}{\sfdefault}{m}{sl}
\SetMathAlphabet{\mathsfit}{bold}{\encodingdefault}{\sfdefault}{bx}{n}
\newcommand{\R}{\mathbb{R}}
\newcommand{\softmax}{\mathrm{softmax}}
\newtheorem{theorem}{Theorem}[section]
\newtheorem{corollary}[theorem]{Corollary}
\newtheorem{lemma}[theorem]{Lemma}
\newtheorem{definition}[theorem]{Definition}
\newtheorem{conjecture}[theorem]{Conjecture}
\DeclareMathOperator{\poly}{poly}
\DeclareMathOperator{\polylog}{polylog}
\title{Fundamental Limitations on Subquadratic Alternatives to Transformers}
\author{Josh Alman\thanks{* denotes equal contribution} \\
  Department of Computer Science \\
  Columbia University \\
  New York, NY 10027, USA \\
  \texttt{josh@cs.columbia.edu} 
  \And
  Hantao Yu\footnotemark[1] \\
  Department of Computer Science \\
  Columbia University \\
  New York, NY 10027, USA \\
  \texttt{hantao.yu@columbia.edu}
}
\newcommand{\inn}{\textup{in}}
\newcommand{\out}{\textup{out}}
\newcommand{\SETH}{\mathsf{SETH}}
\newcommand{\OV}{\mathsf{OV}}
\newcommand{\OVC}{\mathsf{OVC}}
\newcommand{\MSD}{\mathsf{MSD}}
\newcommand{\LSD}{\mathsf{LSD}}
\newcommand{\SAT}{\mathsf{SAT}}
\newcommand{\PP}{\mathsf{P}}
\newcommand{\NP}{\mathsf{NP}}
\newcommand{\MinIP}{\mathsf{Min}\text{-}\mathsf{IP}}
\newcommand{\MaxIP}{\mathsf{Max}\text{-}\mathsf{IP}}
\newcommand{\BMaxIP}{\mathsf{BMax}\text{-}\mathsf{IP}}
\newcommand{\BOW}{\mathsf{BOW}}
\newcommand{\TF}{\mathsf{TF}}
\begin{document}

\maketitle

\begin{abstract}
    The Transformer architecture is widely deployed in many popular and impactful Large Language Models. At its core is the attention mechanism for calculating correlations between pairs of tokens. Performing an attention computation takes quadratic time in the input size, and had become the time bottleneck for transformer operations. In order to circumvent this, researchers have used a variety of approaches, including designing heuristic algorithms for performing attention computations faster, and proposing alternatives to the attention mechanism which can be computed more quickly. For instance, state space models  such as Mamba were designed to replace attention with an almost linear time alternative.

    In this paper, we prove that any such approach cannot perform important tasks that Transformer is able to perform (assuming a popular conjecture from fine-grained complexity theory). We focus on document similarity tasks, where one is given as input many documents and would like to find a pair which is (approximately) the most similar. We prove that Transformer is able to perform this task, and we prove that this task cannot be performed in truly subquadratic time by any algorithm. Thus, any model which can be evaluated in subquadratic time -- whether because of subquadratic-time heuristics for attention, faster attention replacements like Mamba, or any other reason -- cannot perform this task. In other words, in order to perform tasks that (implicitly or explicitly) involve document similarity, one may as well use Transformer and cannot avoid its quadratic running time.
\end{abstract}

\section{Introduction}












The Transformer architecture \citep{VSPUJGKP17} is widely used for natural language processing \citep{DCLT19,YDYCSL19}, computer vision \citep{DBK+21,CMSUK20}, and many other tasks, and has achieved state-of-the-art performance for numerous applications. At the core of the architecture is the attention mechanism which is designed to calculate the correlation between all pairs of tokens in a given input sequence. Namely, let $Q\in \mathbb{R}^{d_{\inn} \times m}$ be the query matrix, $K \in \mathbb{R}^{d_{\inn}\times m}$ be the key matrix and $V \in \mathbb{R}^{d_{\inn} \times d_{\out}}$ be the value matrix. Given $X \in \mathbb{R}^{n\times d_{\inn}}$, an attention mechanism computes 
\[
A_{Q,K,V}(X) := \softmax(XQK^{\top}X^{\top})XV
\] where the softmax operator 
\[
\softmax(v) = \frac{(\exp(v[1]),\ldots,\exp(v[n]))}{\sum_{i=1}^{n}\exp(v[i])}
\] for $v \in \mathbb{R}^n$ is applied to matrices row-wise. Computing the attention by straightforwardly following the definition above requires quadratic time (in the sequence length $n$), which prohibits efficient model training when the sequence length is too large. As a result, much effort has been devoted to overcoming this obstacle in recent years, and there are two major lines of research to tackle the problem.

The first line of research argues that instead of computing attention exactly in the worst case, it often suffices to use heuristics which work well when the input data has additional structure, or to only return coarse approximations of the attention mechanism which can be computed more quickly. Examples including KDEformer \citep{ZHDK23}, Reformer \citep{KKL20}, Hyperattention \citep{HJKMW24}, Linformer \citep{WLKFM20}, SMYRF \citep{DKOD20}, and Performer \citep{CLD+21}. In many cases, these heuristics result in algorithms which run in close to linear time. However, these techniques usually have corresponding downsides, such as model accuracy drops, or performance gains which do not appear to scale to large inputs.

The second line of research argues that instead of computing or approximating attention, we can replace the standard attention mechanism with new, different mechanisms which can be computed faster. Models such as Longformer \citep{BPC20}, Synthesizer \citep{Synthesizer21}, Routing transformers \citep{routingtransformer21}, and MAMBA \citep{MAMBA23} all aim to circumvent the quadratic barrier by proposing new attention alternatives. A priori, these techniques would result in weaker expressiveness since they replace attention's calculation of token interactions with simpler alternatives, although most also provide empirical evidence that the loss in accuracy at certain tasks is not large.

In this paper, we prove that any approach that takes subquadratic time, no matter whether it uses heuristic or approximations, or a new architecture, or a completely different approach, is inherently unable to perform important learning tasks that a transformer is able to perform. By using tools and popular hardness conjectures from the area of fine-grained complexity theory, we show that many learning tasks involving document similarity cannot possibly be solved in subquadratic time using \emph{any} algorithmic approach. This implies that subquadratic alternatives to standard transformers are not able to solve these simple and natural tasks in machine learning and NLP. To complement this, we show that standard transformers (even simple transformers with one layer and one attention head) are able to perform these tasks, thereby establishing a separation between standard transformers and these new approaches. In other words, we prove that accuracy loss for any task relating to document similarity is \emph{unavoidable} for any subquadratic approach, even when compared only to the simplest transformers, because of the inherent computational complexity of the task.

\subsection{Document Similarity}


In this work we will be focusing on document similarity tasks. We will show that standard transformers are capable of solving these tasks but subquadratic alternatives to transformers cannot.

Document similarity is a fundamental area in natural language processing with many applications including recommender systems \citep{Ost20}, search engines \citep{MAI18}, and plagiarism-detection \citep{BNM17}. For a given document $D$, we first need to define a document embedding to transform it into a vector $v \in \mathbb{R}^d$, and we will then measure how similar two documents are by using a similarity measure on their embedding vectors. There are many ways to embed a document as a vector including Doc2Vec \citep{LM14}, TF-IDF \citep{SJ88}, BERT \citep{DCLT19}, bag-of-words \citep{Harris54}, and many ways to measure how similar two documents are including cosine similarity, Euclidean distance, and Jaccard Similarity. In this work, we focus on two of the most popular options, bag-of-words embedding and cosine similarity, although our results extend naturally to almost any reasonable alternatives. 

\paragraph{Bag-of-words Embedding.} Bag-of-words embedding is a well-studied method of embedding \citep{BNJ03} that is commonly used for text classification \citep{JZCX16}, radiology \citep{JSKE21} and many other settings, especially in NLP. Given a document $D$ and a list of $\ell$ key words, the bag-of-words embedding of $D$ is a vector $v \in \{0,1\}^\ell$ such that the $i$-th entry of $v$ corresponds to whether the $i$-th key word exists in $D$ or not. \footnote{Sometimes the entries are also frequency counts, but that would only make the embedding vectors more complex. Since our main goal is to show hardness, we will focus on the simpler binary case.} For example, if a document only contains one sentence ``\textit{There are ten apples on the apple tree}" and the key words chosen are ``\textit{apple}", ``\textit{tree}", ``\textit{computer}" and ``\textit{ten}", this document will have bag-of-words embedding $(1,1,0,1)$.

\paragraph{Cosine Similarity.} Cosine similarity is one of the most commonly used method to measure how similar two documents are. Given two document embeddings $v,w \in \mathbb{R}^d$, the cosine similarity is given by 
\[
\frac{\langle v,w\rangle}{\|v\|_2\cdot \|w\|_2} \in [0,1]
\] where $1$ represents complete similarity and $0$ represents no similarity. It is defined in this way, rather than just taking the inner product $\langle v,w\rangle$, so that two documents (vectors) with large magnitude can still be considered close if they have similar directions. Cosine similarity is one of the most popular and effective measures; for instance, \cite{SVP21} found that when used for extractive multi-document text summarization, cosine similarity gives the best results.

\paragraph{Problem Statement.} We define our most similar document $(\MSD)$ and least similar document $(\LSD)$ problems as: given a set of $n$ binary vectors $v_1,\ldots,v_n$ of length $d$ (document embeddings), the goal is to find two documents that are the most/least similar to each other in terms of cosine similarity. There are many natural variants of these two problems, and we prove similar hardness for all of them:
\begin{enumerate}
    \item (Bichromatic $\MSD,\LSD$) Sometimes we have two sets of documents $A,B$ and we want to find one document from each set such that the pair is (un)similar.
    \item ($\gamma\textup{-}\MSD, \gamma\textup{-}\LSD$) Sometimes we might only need to find a pair of documents that is \emph{approximately} the most (un)similar, (up to an approximation factor $\gamma$) and not necessarily the optimal pair.
    \item $(\MSD_{n,d,t},\LSD_{n,d,t})$ Sometimes we only want to know if there exists a pair whose cosine similarity is above (or below) a threshold $t \in [0,1]$.
\end{enumerate}

These variants occur in many practical scenarios when using a large language model. They can arise explicitly when the descriptions of $n$ documents of size $\ell$ are given to a language model, and the model is asked to find the most similar pair of documents. However, there are many scenarios where document similarity can arise implicitly as well, such as in plagiarism detection and team matching.

\subsection{Main Results}
\label{sec: main results}

Our hardness results are based on a prevalent conjecture from fine-grained complexity theory called the Strong Exponential Time Hypothesis $(\SETH)$:
\begin{center}
    \emph{For every $\varepsilon>0$, there is an integer $k$ such that $k\SAT$ with $n$ variables requires $\Omega(2^{(1-\varepsilon)n})$ time}.
\end{center} $\SETH$ was first introduced by \cite{IP01}, and is a popular strengthening of the conjecture that $\PP \neq \NP$. (In other words, proving that $\SETH$ is true implies $\PP \neq \NP$.) Since then, there has been a long line of work studying and making use of $\SETH$. Prior work has given theoretical evidence for $\SETH$~\citep{IP01,ABDN18,Williams15talk}, and has used $\SETH$ to prove hardness of problems in many different areas of algorithm design. See the survey \cite{Williams18} for a detailed background.

\paragraph{Main Results: Limitations of subquadratic alternatives.} We show that $\MSD,\LSD$ and their variants require quadratic time assuming $\SETH$ for some natural choice of parameters, and therefore any subquadratic alternatives to transformers are not able to solve them due to computational constraints. The formal hardness results are as follows, and vary slightly in the dimension parameter $\ell$ depending on the details of the problem:
\begin{theorem}[\Cref{thm: approximate LSD is hard} and Corollary \ref{cor: variants of LSD are OV hard}]
Assuming $\SETH$, for every $\varepsilon>0$, there exists a constant $c>0$ such that $\LSD_{n,\ell}$ cannot be solved in $O(n^{2-\varepsilon})$ time when $\ell = c\log n$. Moreover, the same lower bound also holds for $\LSD_{n,\ell,t}$ for some $0< t < 1$,  $\gamma\text{-}\LSD_{n,\ell}$ for any $\gamma \geq 1$, and bichromatic $\LSD_{n,\ell}$.
\end{theorem}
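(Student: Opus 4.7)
The plan is to reduce the Orthogonal Vectors problem $\OV$ — known under $\SETH$ (via Williams' reduction) to require $\Omega(n^{2-\varepsilon})$ time for vectors of dimension $d = c(\varepsilon)\log n$ — to each variant of $\LSD$. Fix an $\OV$ instance $(A, B)$ with $A, B \subseteq \{0,1\}^d$ and $|A| = |B| = n$, where I assume no input vector is zero (such a vector trivially certifies $\OV$). The single observation driving everything is that, for nonzero binary $a, b$, the cosine similarity $\langle a, b\rangle/(\|a\|_2\|b\|_2)$ is zero iff $\langle a, b\rangle = 0$, and is otherwise at least $1/d$.

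For bichromatic $\LSD_{n,\ell}$ I would hand $(A, B)$ directly to the algorithm with $\ell = d$: the minimum cross cosine similarity is zero iff $\OV$ has a solution. For the monochromatic case I would attach two-bit side-signatures to prevent accidental within-side orthogonalities: set $a' := (a, 1, 0)$ for $a \in A$ and $b' := (b, 0, 1)$ for $b \in B$, both in $\{0,1\}^{d+2}$. Then $\langle a', b'\rangle = \langle a, b\rangle$, while $\langle a'_1, a'_2\rangle \geq 1$ and $\langle b'_1, b'_2\rangle \geq 1$, so no within-side pair can be orthogonal. Taking $V := \{a' : a \in A\}\cup\{b' : b \in B\}$, the minimum cosine similarity over $V$ is zero iff $\OV$ has a solution, and otherwise at least $1/(d+2)$.

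This "$0$ versus $\geq 1/(d+2)$" gap delivers every claimed variant at once. For the threshold version, choose $t := 1/(2(d+2)) \in (0,1)$: an $\LSD_{n,\ell,t}$ oracle answers yes iff $\OV$ does. For $\gamma$-$\LSD_{n,\ell}$ with arbitrary $\gamma \geq 1$, observe that $\gamma \cdot 0 = 0 < 1/(d+2)$, so any $\gamma$-approximate minimum still separates the two cases. Since the output dimension is $\ell = d+2 \leq c'\log n$ for $c' := c+1$, a subquadratic algorithm for any of these variants immediately yields a subquadratic $\OV$ algorithm, contradicting $\SETH$.

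The delicate point is the shrinking gap $1/(d+2)$: it is exactly why the approximation result holds only because $\OV$-yes forces the minimum to be \emph{exactly} zero, not merely small. If the statement further demanded a threshold $t$ independent of $n$, I would handle this by bucketing $A$ and $B$ by Hamming weight (just $O(\log n)$ buckets per side), running the above reduction inside each bucket where all vectors share a common norm and the attainable similarity values form a discrete set, and then merging the $O(\log^2 n)$ sub-instances into one $\LSD$ instance via bucket-indicator signatures designed so that no cross-bucket pair can become a minimizer. This gluing step, and verifying that no spurious minima are introduced, is the only place the argument requires real care; the rest follows from the clean equivalence "cosine similarity $=0 \iff$ orthogonal."
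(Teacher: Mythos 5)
Your proposal is correct and rests on the same core observation as the paper: for nonzero binary vectors, cosine similarity is exactly $0$ if and only if the inner product is $0$, so an $\OV$ instance (after discarding any all-zero vector) maps directly onto an $\LSD$ instance, and a $\gamma$-approximate minimizer of value $0$ certifies an orthogonal pair for any $\gamma\geq 1$. The paper proves hardness of $\gamma$-$\LSD$ by feeding a monochromatic $\OV$ instance into the $\gamma$-$\LSD$ oracle unchanged, then derives hardness of $\LSD$, bichromatic $\LSD$, and $\LSD_{n,\ell,t}$ by ``easier-than'' relations and a binary-search argument; you instead route through bichromatic $\OV$, handle monochromatic $\LSD$ via the $(\cdot,1,0)/(\cdot,0,1)$ signature padding (the same trick used in the paper's Lemma~\ref{lem: bichromatic OV equivalent to OV}), and give an explicit threshold $t=1/(2(d+2))$ rather than an existential one. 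Both the explicit threshold and the direct bichromatic reduction are mild sharpenings but not a genuinely different argument; your final paragraph correctly flags that the shrinking gap $1/(d+2)$ is harmless here precisely because the $\OV$-yes case forces the minimum to be exactly zero.
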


\begin{theorem}[\Cref{thm: approximate MSD is hard} and Corollary \ref{cor: variants of MSD are OV hard}]
Assuming $\SETH$, for every $\varepsilon>0$, there exists a constant $c>0$ such that $\MSD_{n,\ell}$ cannot be solved in $O(n^{2-\varepsilon})$ time when $\ell = n^{\frac{c}{\log\log n}}$. Moreover, the same lower bound also holds for $\MSD_{n,\ell,t}$ for some $0<t<1$ and $\gamma\text{-}\MSD_{n,\ell}$ for any $1 \leq \gamma \leq \polylog(n)$.
\end{theorem}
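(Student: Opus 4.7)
The plan is to reduce from a suitable approximate variant of Normalized Max Inner Product ($\NorMaxIP$), whose SETH-hardness at slightly super-polylogarithmic dimension is already developed in the fine-grained complexity literature (and, one presumes, in an earlier section of this paper, given the presence of the \verb|\NorMaxIP| macro). The crux is that on binary vectors the cosine similarity $\langle v_i,v_j\rangle/(\|v_i\|_2\|v_j\|_2)$ is literally the normalized inner product, so $\gamma$-$\MSD$ on $\{0,1\}^\ell$ is the same problem as $\gamma$-$\NorMaxIP$ on $\{0,1\}^\ell$.

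The reason MSD requires dimension $\ell = n^{c/\log\log n}$ rather than $\ell = c\log n$ as in the LSD theorem is structural: LSD asks for a near-orthogonal pair, so the $O(\log n)$-dimensional $\OV$ hardness from $\SETH$ transfers almost directly, whereas MSD asks for a high-similarity pair, and the best known $\SETH$-based lower bounds for (approximate) Max-IP go through the distributed-PCP framework of Abboud--Rubinstein--Williams and its refinements (Chen, Rubinstein, Karthik--Manurangsi). Those reductions naturally produce hard $\NorMaxIP$ instances at dimension $n^{c/\log\log n}$ and come with a polylogarithmic multiplicative gap, which is exactly the window $1 \le \gamma \le \polylog(n)$ in the theorem. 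So step one is to invoke (or instantiate) this hardness, yielding, for every $\varepsilon>0$ and every $\gamma \le \polylog(n)$, a constant $c$ such that $\gamma$-$\NorMaxIP$ on $n$ binary vectors of dimension $n^{c/\log\log n}$ cannot be solved in $O(n^{2-\varepsilon})$ time under $\SETH$.

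Step two is the trivial identification of $\gamma$-$\NorMaxIP$ with $\gamma$-$\MSD$ on binary vectors, which immediately gives the $\gamma$-$\MSD_{n,\ell}$ part of the statement and, specializing to $\gamma=1$, the exact $\MSD_{n,\ell}$ part. Step three handles the threshold variant $\MSD_{n,\ell,t}$: the distributed-PCP reduction is actually a \emph{gap} reduction, producing instances satisfying the promise that either some pair has cosine similarity $\ge \tau_{\text{yes}}$ or all pairs have cosine similarity $\le \tau_{\text{no}} = \tau_{\text{yes}}/\gamma$, so any threshold $t \in (\tau_{\text{no}},\tau_{\text{yes}}]$ witnesses the claimed hardness of $\MSD_{n,\ell,t}$ for some $0<t<1$.

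The reduction itself is essentially syntactic once the underlying $\NorMaxIP$ hardness is in hand, so the main obstacle is on the parameter-tracking side: one must verify that the distributed-PCP reduction, together with whatever gap amplification is used to push the approximation factor up to $\polylog(n)$, still lands at dimension $n^{c/\log\log n}$ for some constant $c$ depending on $\varepsilon$ and $\gamma$, and that the resulting vectors can be taken to be binary (rather than arbitrary reals) without loss. The binarization and gap-amplification steps are where the exponent $c/\log\log n$ is actually determined, and keeping that exponent finite while driving $\gamma$ up to $\polylog(n)$ is the delicate quantitative point that distinguishes this theorem from the much simpler LSD statement.
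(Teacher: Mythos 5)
Your high-level picture matches the paper's: the hardness ultimately derives from the distributed-PCP line of work (Chen; Karthik--Manurangsi), MSD on binary vectors is tautologically the normalized max inner product problem, and the threshold variant $\MSD_{n,\ell,t}$ inherits hardness from the gap structure of the reduction. The paper even says explicitly that it ``follows the same idea as Theorem~6.1 of [KM20],'' which is the theorem you are implicitly invoking. So you have the right ingredients and the right chain of dependencies.

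Where your sketch diverges from what's actually written is in what it leaves to ``parameter tracking.'' The paper does not cite a single black-box theorem that produces hard $\NorMaxIP$ instances at dimension $n^{c/\log\log n}$ with polylogarithmic gap; it assembles this from four separate pieces. First, Chen's Lemma~3.7 gives $\SETH$-hardness of $(\log n)$-additive bichromatic $\MaxIP$ at dimension $c\log n$ --- note, \emph{additive} gap and \emph{unnormalized} inner product. Second, a padding lemma (\Cref{lem: additive MaxIP to additive MSD}) appends ones and zeros so every vector has exactly $\ell$ ones, which is precisely the step that converts $\MaxIP$-hardness into cosine-similarity ($\MSD$) hardness; you gesture at ``binarization,'' but the instances are already binary, and what's actually required is \emph{norm equalization}, which you don't name. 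Third, the cited KM20 Theorem~6.2 (a graph-theoretic construction) upgrades the additive gap to a multiplicative $(1+\frac{1}{\log\log n})$ gap, but only at dimension $(\log n)^{O(1)}$, which is far smaller than $n^{c/\log\log n}$. Fourth --- and this is the piece that actually produces the exponent in the theorem statement --- a tensoring argument (\Cref{lem: KM20 Theorem 6.1}) takes $q = \frac{\log n}{(\log\log n)^2}$ Kronecker powers, simultaneously driving the dimension up to $(\log n)^{kq} = n^{\Theta(1/\log\log n)}$ and the multiplicative gap up to $(1+\frac{1}{\log\log n})^q = 2^{(\log n)^{1-o(1)}} \gg \polylog(n)$, using the identity $\frac{\langle v^{\otimes q}, w^{\otimes q}\rangle}{\|v^{\otimes q}\|\,\|w^{\otimes q}\|} = \bigl(\frac{\langle v,w\rangle}{\|v\|\,\|w\|}\bigr)^q$. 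Your sketch folds steps three and four together and attributes both the dimension $n^{c/\log\log n}$ and the polylog gap to the base distributed-PCP reduction, which is not quite right; the base gap is only $1 + o(1)$, and the tensoring is what buys you the headroom. So: right approach, but the padding lemma and the tensoring lemma are genuine lemmas you'd need to prove, not bookkeeping, and your parameter attribution conflates pre- and post-amplification values.
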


For bichromatic $\MSD$ we can obtain a stronger hardness result.

\begin{theorem}[\Cref{thm: approximate MSD is hard} and Corollary \ref{cor: variants of MSD are OV hard}]
Assuming $\SETH$, for every $\varepsilon>0$, there exists a constant $c>0$ such that bichromatic $\MSD_{n,\ell}$ cannot be solved in $O(n^{2-\varepsilon})$ time when $\ell = c\log n$.
\end{theorem}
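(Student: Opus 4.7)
The plan is to reduce from the orthogonal vectors problem ($\OV$)---which is SETH-hard on $n$ binary vectors of dimension $d = O(\log n)$---to bichromatic $\MSD_{n,\ell}$ with $\ell = O(\log n)$. Given an $\OV$ instance $(A, B)$ with $A, B \subseteq \{0,1\}^d$ of size $n$, I will construct $(U, V)$ with $U, V \subseteq \{0,1\}^\ell$ such that (i) every $u \in U$ has the same Hamming weight, and likewise every $v \in V$, so that cosine similarity is a fixed scalar multiple of the inner product, and (ii) $\langle u_a, v_b\rangle$ is a strictly decreasing affine function of $\langle a, b\rangle$. Together, (i) and (ii) force the maximum-cosine-similarity pair in $U \times V$ to coincide with a minimum-inner-product pair in $A \times B$, so solving bichromatic $\MSD$ decides whether an $\OV$ pair exists.

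Concretely, I will split the $\ell = 3d$ coordinates into three disjoint blocks. In the ``$\OV$-block'' of dimension $d$, $u_a$ carries the complement $\bar a$ and $v_b$ carries $b$, so this block contributes $\langle \bar a, b\rangle = \|b\|_1 - \langle a, b\rangle$ to $\langle u_a, v_b\rangle$. In the ``$B$-balancing block'' of dimension $d$, every $u_a$ is the all-ones vector and each $v_b$ is an arbitrary binary string of Hamming weight $d - \|b\|_1$; this adds $d - \|b\|_1$ to the inner product (cancelling the $\|b\|_1$ term above) and makes each $v_b$ have total Hamming weight exactly $d$. In the ``$A$-balancing block'' of dimension $d$, every $v_b$ is zero and $u_a$ is any string of weight $\|a\|_1$; this contributes nothing to the inner product and forces each $u_a$ to have total Hamming weight $(d - \|a\|_1) + d + \|a\|_1 = 2d$. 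The resulting cosine similarity is exactly $(d - \langle a, b\rangle)/(d\sqrt{2})$, which equals $1/\sqrt{2}$ iff $(a,b)$ is an $\OV$ pair and is at most $(d-1)/(d\sqrt{2})$ otherwise.

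The main conceptual step---and the only non-routine part---is designing this three-block layout so that norm-equalization and inner-product tracking happen simultaneously: the natural reduction maps $\OV$ to minimum inner product, but cosine similarity is sensitive to the individual norms of both vectors, and naive padding that equalizes norms also tends to inject $\|a\|_1$- or $\|b\|_1$-dependent noise into the inner product. Everything else is elementary coordinate counting. The reduction runs in $O(n\ell) = O(n \log n)$ time and produces an instance of the same cardinality $n$ in dimension $\ell = 3d = O(\log n)$, so any $O(n^{2-\varepsilon})$-time algorithm for bichromatic $\MSD_{n,\ell}$ would yield an $O(n^{2-\varepsilon})$-time algorithm for $\OV$, contradicting $\SETH$. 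Choosing the threshold $t = 1/\sqrt{2}$ in the same construction also gives hardness for $\MSD_{n,\ell,t}$, matching the corollary's claim in the bichromatic case.
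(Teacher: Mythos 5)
Your reduction is correct. All $u_a$ have Hamming weight $2d$ and all $v_b$ have weight $d$, the block-by-block inner product is $(\|b\|_1-\langle a,b\rangle)+(d-\|b\|_1)+0=d-\langle a,b\rangle$, so the cosine is $\frac{d-\langle a,b\rangle}{d\sqrt2}$, which hits $1/\sqrt2$ exactly on orthogonal pairs and is at most $\frac{d-1}{d\sqrt2}$ otherwise. No zero vectors arise, the output has $n$ vectors per side of dimension $3d$, and the reduction runs in $O(n\log n)$ time, so any $O(n^{2-\varepsilon})$ algorithm for bichromatic $\MSD_{n,c\log n}$ would give one for bichromatic $\OV$, hence for $\OV$ (by \Cref{lem: bichromatic OV equivalent to OV}), refuting $\OVC$ and $\SETH$. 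You should cite that equivalence lemma since $\OVC$ is stated for the monochromatic problem, but that is a one-line fix.

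Your route differs from the paper's in a useful way. The paper does not write out a dedicated proof for the bichromatic $\MSD$ bound at $\ell=c\log n$; the intended argument is implicitly a chain of \Cref{lem: bichromatic OV equivalent to OV}, \Cref{lem: OV reduces to bichromatic MaxIP} (complementation plus bucketing by Hamming weight, incurring an $\ell^2$ overhead because the monochromatic starting point forces you to guess the weight of both endpoints), and a norm-equalization step in the style of \Cref{lem: additive MaxIP to additive MSD}. By starting directly from \emph{bichromatic} $\OV$, you can complement just one side and therefore skip the Hamming-weight bucketing entirely; you then fold the norm-equalization into the same three-block gadget rather than treating it as a separate reduction. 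The result is a single, self-contained map from bichromatic $\OV$ to bichromatic $\MSD$ with a fixed, explicit threshold $1/\sqrt2$, which also immediately yields hardness of $\MSD_{n,\ell,t}$ at $t=1/\sqrt2$. This is somewhat cleaner than assembling the paper's lemmas, at the modest cost of re-deriving the padding trick inline.
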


In all these problems, we prove hardness when $\ell$ is $\Theta(\log n)$ or $n^{o(1)}$. This is a natural choice: for any smaller $\ell<\log n$, there could be at most $2^{\ell}<n$ vectors under a bag-of-words embedding, which means that there would be duplicate vectors in our instance. Making $\ell$ larger can only make the problem harder.

It follows that any language task which, either explicitly or implicitly, involves solving any of these document similarity problems, cannot be solved in subquadratic time \emph{when the input size is large enough}, no matter what the parameters or architecture of the language model are.\footnote{We briefly emphasize that the parameter $\ell$ in these similarity problems need not be related to the architecture or parameters (like $d_{\inn}, d_{\out}$, etc) of a language model which solves the problems. For instance, to ask a language model to solve $\MSD_{n,\ell}$, we may ask it ``Which of the following paragraphs is most similar?'' followed by a list of $n$ different paragraphs of at most $\ell$ words each. Thus, the input to the language model would be a string of length $O(n \ell)$, and since $\ell < n^{o(1)}$ is small compared to $n$, our result shows that a language model would need to take quadratic time in the string length to answer this type of query.}

\paragraph{Main Results: Representational strength of standard transformers.}

When a problem requires quadratic time to solve, this means that subquadratic-time models cannot solve it, but it does not necessarily mean that a transformer model can solve it. For example, \cite{SHT24B} defined a problem called ``Match3'' which can be solved in quadratic time by a textbook algorithm, but which they prove cannot be solved by a one-layer transformer unless it has a lot of attention heads or a very high embedding dimension.

We show that this is not the case for $\MSD$ and $\LSD$ by showing that a single standard attention unit with input and output MLPs can solve $\MSD_{n,d,t},\LSD_{n,d,t}$ and a simpler version of $\MSD$, the Orthogonal Vectors problem ($\OV$), where one is given a set of binary vectors and needs to determine if there exists a pair of vectors that are orthogonal. Thus, these problems establish a separation between standard transformers and subquadratic alternatives to transformers.

\begin{theorem}[\Cref{thm: transformer solves OV,thm: transformers can solve SD}]
A single unit of standard attention with input and output MLPs, embedding dimension $\ell+1$ can solve $\OV_{n,\ell}$ and $\MSD_{n,\ell,t},\LSD_{n,\ell,t}$ for any $0 \leq t \leq 1$.
\end{theorem}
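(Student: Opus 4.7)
The plan is to explicitly construct a single attention unit, with input and output MLPs, that implements each of $\OV$, $\MSD$, and $\LSD$ by using softmax as an approximate hardmax together with a designated \emph{sink} token whose score serves as the threshold against which each token's best partner is compared. The output MLP reads off, at each position, whether that token found a partner beating the threshold, and the overall decision is the disjunction of these per-token bits.

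First I would handle $\MSD_{n,\ell,t}$; $\LSD_{n,\ell,t}$ follows by flipping a sign. The input MLP normalizes each document vector $v_i \in \{0,1\}^\ell$ to $\hat v_i = v_i/\|v_i\|_2$ and emits the embedding $x_i = (\hat v_i, 1) \in \mathbb{R}^{\ell+1}$; normalization is implementable because $\|v_i\|_2$ takes only $\ell+1$ values on binary inputs, each of which an MLP can select. A single extra sink token is emitted with $x_0 = (\vec 0, 1+t-\delta)$ for a tiny $\delta = 1/\poly(\ell)$ chosen so that no realizable cosine similarity equals the shifted threshold. Setting $QK^\top = \beta I_{\ell+1}$ with $\beta = \poly(n,\ell)$ yields attention scores $\beta(\text{cossim}(v_i,v_j)+1)$ between normal tokens and $\beta(1+t-\delta)$ from any normal token to the sink, so the softmax weight $w_{i0}$ on the sink tends to $0$ when some $\text{cossim}(v_i,v_j) \geq t$ and to $1$ otherwise. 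Choosing $V$ so that the last coordinate of $x_j V$ equals the last coordinate of $x_j$ makes the last coordinate of the attention output equal $1 + (t-\delta) w_{i0}$; the output MLP thresholds this at $1 + (t-\delta)/2$ to produce a per-token indicator of whether token $i$ has a similar partner. For $\LSD$, I would replace $\beta I$ by $-\beta I$ and keep the same sink; low cosine similarity then becomes high score, and the same readout works.

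For $\OV_{n,\ell}$, I would repurpose the last coordinate as a bichromatic flag. The input MLP emits $x_i = (a_i,+1)$ for $a_i \in A$, $x_i = (b_i,-1)$ for $b_i \in B$, and the sink as $x_0 = (\vec 0, 0)$. Taking $QK^\top = (-\beta I_\ell) \oplus (-\beta M_0)$ with $M_0 \in (0,1)$ produces scores $-\beta\langle v_i,v_j\rangle + \beta M_0$ for cross-set pairs, $-\beta\langle v_i,v_j\rangle - \beta M_0$ for same-set pairs, and $0$ to the sink. With $0 < M_0 < 1$ the sink score $0$ lies strictly between the "orthogonal cross-set exists" case ($\beta M_0 > 0$) and every other case ($\leq \beta(M_0-1) < 0$), so the softmax concentrates on an orthogonal cross-set partner when one exists, and on the sink otherwise. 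Reading the flag coordinate of the attention output via $V$ and thresholding $|o_i[\ell+1]|$ in the output MLP yields a per-token indicator, again OR-aggregated across positions.

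The main obstacle is making the hardmax approximation rigorous: softmax with finite $\beta$ never outputs exactly $0$ or $1$, so I need to argue that for $\beta = \poly(n,\ell)$ the sink weight $w_{i0}$ is within $o(1)$ of its hardmax limit, and the output MLP's threshold rounds correctly. This relies on the fact that cosine similarities and inner products of binary $\ell$-vectors are rationals with denominators bounded by $\poly(\ell)$, so there is a gap of at least $1/\poly(\ell)$ between the "above threshold" and "below threshold" regimes, and a polynomially large $\beta$ suffices. A secondary subtlety is fitting both the sink indicator and, in the bichromatic case, the set indicator into the single extra coordinate of $\mathbb{R}^{\ell+1}$; using flag values $\{+1, -1, 0\}$ for $A$, $B$, and the sink is enough because the scoring matrix $QK^\top$ can linearly discriminate the three cases.
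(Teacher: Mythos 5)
Your overall strategy --- normalize each document, append a sink token whose score encodes the threshold, let softmax with a large temperature $\beta$ approximate hardmax, and read off whether the sink ``won'' --- is the same idea as the paper's (\Cref{thm: transformer solves OV} and \Cref{thm: transformers can solve SD}). However, for $\MSD_{n,\ell,t}$ there is a fatal gap that you do not address: the softmax in row $i$ also attends to position $i$ itself, and after $\ell_2$-normalization the cosine similarity of a token with itself is exactly $1 \geq t$ for every admissible $t \in [0,1]$. Concretely, with $x_i = (\hat v_i, 1)$ and $QK^\top = \beta I_{\ell+1}$, the self-entry of row $i$ has score $\beta \langle x_i, x_i\rangle = 2\beta$, which strictly dominates the sink's score $\beta(1+t-\delta)$ whenever $t < 1 + \delta$, i.e., always. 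Hence $w_{i0} \to 0$ at \emph{every} position, regardless of whether a genuine pair $i \neq j$ with $\textup{cossim}(v_i,v_j) \geq t$ exists, and your readout $1 + (t-\delta)w_{i0}$ is constant across YES and NO instances. You need some explicit mechanism to exclude or down-weight self-attention (a diagonal bias in $QK^\top$, positional information, or a different value/readout that cancels the self-contribution); without one, the construction does not decide $\MSD$. Note the asymmetry: for $\OV$ the paper's construction is safe because $\langle v_i, v_i\rangle = \|v_i\|_1 \geq 1$ can never trigger an orthogonality false positive, and for $\LSD$ (after flipping the sign) the self-pair has the \emph{lowest} score and so is harmless --- the problem is specific to $\MSD$.

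A second, smaller mismatch concerns $\OV$. Your construction solves \emph{bichromatic} $\OV$, using a flag coordinate in $\{+1,-1,0\}$ to mark membership in $A$, $B$, or the sink. But the theorem's $\OV_{n,\ell}$ is the monochromatic problem with a single unlabeled set $\{v_1,\ldots,v_n\}$; since the input MLP acts row-wise and identically on each token, it has no information to assign consistent $\pm 1$ flags to a monochromatic instance. The paper's version avoids this entirely: first check for zero vectors (any would make the instance trivially YES), then embed each $v_i$ directly, use $QK^\top = -3\log n \cdot I_\ell$, append a zero sink, and rely on $\langle v_i, v_i\rangle \geq 1$ to rule out a self-induced false positive. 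You should either give a monochromatic construction directly or reduce to one; as written this does not prove the stated theorem. Finally, a minor point: you take $\beta = \poly(n,\ell)$, whereas the paper shows $\beta = \Theta(\log n)$ already yields a polynomial gap (at least $\tfrac{1}{n+1}$ vs.\ at most $\tfrac{1}{(n+1)^{1.5}}$); both work, but the paper's choice gives cleaner constants.
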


In principle, there could be concerns with representational results like these that the weights of the model are complicated and hard to find in training. However, our constructions of transformers that solve these problems are very simple: our MLPs are piece-wise linear functions that are easy to approximate/compute, and our key/query/value matrices in the attention unit are also simple, sparse and low-rank matrices with small entries.

\subsection{Related Work}

In recent years, several theoretical and algorithmic aspects of transformers have been extensively studied. We discuss next two aspects that are most relevant to our work, and which we build on in the proofs of our results.

\paragraph{Representational strengths of Transformers.} Representational strengths of transformers have been widely studied in recent years. It has been shown that transformers have several natural limitations, including not being able to model periodic
finite-state languages or hierarchical structure \citep{Hahn20}, and not being able to recognize some counter languages without large depth \citep{BAG20}. On the other hand, transformers are able to recognize formal languages such as Dyck languages \citep{YPPN21}, simulate finite-state automata \citep{LAGKZ23} with $O(\log n)$ depth, and simulate Turing machines if given enough depth \citep{WCM24,MS24}. There is also a line of work \citep{HAF22,MSS22} that understands what transformers can compute through the lens of circuit complexity; see \cite{SMWCA24} for a comprehensive survey.

Another line of work has shown that transformers can compute particular problems of interest, including PARITY (which perceptrons are notably unable to compute) \citep{CC22} and learning problems that attention is particularly suited toward like ``sparse averaging'' and ``$k$-hop induction heads'' \citep{SHT24,SHT24B}.

\paragraph{Faster attention mechanisms.} As previously discussed, attention computation remains a bottleneck for efficiency, and many different approaches have been proposed to tackle this issue. These approaches typically result in accuracy loss (and a consequence of our main result is theoretical proof that this is necessary), which has mostly been investigated empirically.

The main prior work on theoretical limitations of subquadratic transformers we're aware of is \cite{SHT24}. Among other results, they study ``kernel-based subquadratic attention'' in which one heuristically computes attention faster by approximating intermediate matrices in the attention computation either by sparse matrices \citep{KKL20, routingtransformer21, DKOD20} or low-rank matrices \citep{CLD+21,KVPF20}. \cite{SHT24} defined a ``$k$-hop induction heads'' task and proved that transformers can perform this task but kernel-based subquadratic attention models cannot. Our limitation result is more general than this, applying to any approach that runs in subquadratic time, rather than needing to focus on a particular architecture or heuristic.

\paragraph{Fine-Grained Complexity and Machine Learning}

Fine-grained complexity theory has been successful at proving conditional lower bounds for problems in diverse areas of algorithm design, such as in graph theory \citep{AW14,ABW15,WW18} and combinatorial optimization \citep{Rub18,ATYZ24,KPS17,BI15}. See \cite{Williams18} for a detailed survey.

Recently, it has been shown that many problems in machine learning are also inherently hard assuming popular conjectures in fine-grained complexity. For example, \cite{BIS17,AG24} use $\SETH$ to give a lower bound on the time to perform kernel density estimation, \cite{HLSL} use $\SETH$ to demonstrate a computational phase transition in modern Hopfield models, \cite{HWLPSL24} and \cite{HSKSL24} use $\SETH$ to characterize the computational limits of diffusion transformers and Low-Rank Adaptation for transformers respectively, and \cite{KWH23,AS24} use $\SETH$ to give a lower bound on computational complexity of approximating the attention mechanism itself. We remark that our results, together, give a new, alternate proof of the hardness of attention assuming $\SETH$. Indeed, we prove that a single attention unit can solve $\MSD$, and that $\MSD$ requires quadratic time assuming $\SETH$, which together imply that evaluating the attention unit requires quadratic time.

\paragraph{Hardness of similarity search.} Similarity search has been a fundamental area in modern machine learning, and the efficiency of similarity search algorithms has been well-studied through the lens of fine-grained complexity. Perhaps the most well-known problem in this area is the nearest neighbor problem.

Nearest Neighbor is a fundamental problem in machine learning which has been the subject of decades of research \citep{IM98,AI08,AR15,ALRW17,TPS02,ECS24,UHLMG22,GAR19}. Given a dataset $P \subseteq \mathbb{R}^d$ with $n$ points, we want to preprocess it such that given a query point $q \in \mathbb{R}^d$, one can find its nearest neighbor in $P$ (in some metric) efficiently. The nearest neighbor problem gives one common way of classifying objects in machine learning: given an object with unknown label, one just find its nearest neighbor in the dataset and use its label as the label for the target object. In addition, it has many applications in classical similarity search over different types of data including text, images, audio (see \cite{SDI06} for a complete overview).

There are many natural variants of the nearest neighbor problem, including the closest (furthest) pair problem where one is given a dataset $P$ and wants to find the two points from $P$ that are the closest (furthest). This problem is exactly $\MinIP(\MaxIP)$ if we let inner product be the measure of closeness. Under standard Euclidean distance, one can also express the distance between two points as $\|p-q\| = \|p\|+\|q\|-2\langle p,q\rangle$ such that when all points have the same $\ell_2$ norm, finding the closest pair is equivalent to finding the pair with the largest inner product, i.e. $\MaxIP$. In fact, it has been shown that assuming $\SETH$ is true, finding closest pairs in Euclidean or Manhattan distance both require quadratic time \citep{AW15, Rub18}. We will use this in our proofs below. 


\section{Preliminaries}

\paragraph{Notation.}  For a vector $v$, we use $v[i]$ to denote its $i$-th entry for all $i$. For a matrix $A$, we use $A_{i,:}$ to denote the $i$-th row of $A$ and $A_{:,j}$ to denote the $j$-th column of $A$. Given a positive integer $d$, we use $\mathbf{1}_{d} \in \mathbb{R}^d$ to denote the vector whose entries are all $1$. Given two vectors $v \in \mathbb{R}^a, w \in \mathbb{R}^b$, we use $v \otimes w \in \mathbb{R}^{ab}$ to denote the Kronecker product of $v$ and $w$ (whose entries are all the products of an entry of $v$ and an entry of $w$) and $v\circ w \in \mathbb{R}^{a+b}$ to denote the concatenation of $v$ and $w$. $\|\cdot\|$ refers to $\ell_2$ norm unless otherwise specified. Given a binary vector $v \in \{0,1\}^d$, we use $\Bar{v} \in \{0,1\}^d$ to denote the vector where all entries are flipped. The $\softmax$ operator, when given a vector $v \in \mathbb{R}^n$, outputs a vector in $\R^n$ given by
\[
\softmax(v) = \frac{(\exp(v[1]),\ldots,\exp(v[n]))}{\sum_{i=1}^{n}\exp(v[i])}.
\] For matrices $A \in \mathbb{R}^{n \times n}$, we apply $\softmax$ operator row-wise, so that 
\[
\softmax(A)_{i,:} = \softmax(A_{i,:}).
\]

\subsection{The Transformer Architecture}

Transformer is a machine learning architecture composed mainly of attention layers and multi-layer perceptrons (MLP). We model the input to a attention unit is a $n \times d_{\inn}$ matrix where $d_{\inn}$ is the input dimension and the output of a attention is a $n \times d_{\out}$ matrix where $d_{\out}$ is the output dimension.

\begin{definition}[attention]
For input dimension $d_{\inn} \in \mathbb{N}$, output dimension $d_{\out} \in \mathbb{N}$, embedding dimension $m \in \mathbb{N}$, matrices $Q,K \in \mathbb{R}^{d_{\inn} \times m}$ and $V \in \mathbb{R}^{d_{\inn} \times d_{\out}}$, a \emph{attention} is a mapping $A_{Q,K,V}: \mathbb{R}^{n \times d_{\inn}} \rightarrow \mathbb{R}^{n \times d_{\out}}$ by 
\[
A_{Q,K,V}(X) = \softmax(XQK^{\top}X^{\top})XV.
\] We use $\mathcal{A}_{d_{\inn},m,d_{\out}} = \{A_{Q,K,V}: Q \in \mathbb{R}^{d_{\inn} \times m},K \in \mathbb{R}^{d_{\inn}\times m},V \in \mathbb{R}^{d_{\inn} \times d_{\out}}\}$ to denote all such attentions.
\end{definition}

An attention layer consists of many attentions in parallel. Upon receiving input $X$, each attention computes an output locally, then the results are all concatenated into a large matrix before being sent to the next layer. Our constructions in this paper will only need transformers with one layer and one single unit of attention to illustrate representational strength (transformers with more layers and heads could only be stronger), so we omit the formal definitions of attention layers.

A \emph{multi-layer perceptron} (MLP) is a type of neural network that is used to learn nonlinear relationships in data. Mathematically, it is usually formulated as a neural network with different types of activation functions \citep{BFT17,MPCB14,JGH18} or sometimes as a more specific threshold circuit \citep{MSS94}. Since the universal approximation theorem \citep{HSW89} states that any continuous function with a finite support can be approximated by a neural network with one hidden layer, \cite{SHT24,SHT24B} modeled MLP as an arbitrary function $\phi:\mathbb{R}^d \rightarrow \mathbb{R}^{d'}$ defined on fixed-precision vectors, and we will notationally use that definition here.

\begin{definition}[Multi-player perceptron]
A multi-layer perceptron is represented by some continuous function $\varphi:\mathbb{R}^{a} \rightarrow \mathbb{R}^b$ for some positive integers $a,b$. We can apply $\varphi$ to a matrix row-wisely: given any matrix $X \in \mathbb{R}^{n \times a}$, $\varphi(X) = (\varphi(X_1),\ldots,\varphi(X_n)) \in \mathbb{R}^{n \times b}$.
\end{definition}

That said, in all our constructions in this paper, it will suffice to use MLPs which are simple, piecewise-linear functions that fit in all the categories discussed above, and can be easily approximated by a small neural network or any other MLP definition.

In this work, our transformer (with a single attention unit) will be defined as a composition of the first MLP, then one attention unit, then the second MLP. This is a natural model for many well-known transformer models including BERT \citep{BERT}, GPT-3 \cite{GPT3}, GPT-4 \citep{GPT4} and is typically used in theoretical work on simple transformers~\citep{PH22,SHT24,SHT24B}.

Recall in particular that we will be designing such transformers for document similarity problems. In this case, given $n$ documents $D_1,\ldots,D_n$, we will let their bag-of-words embeddings 
\[
(\BOW(D_1),\ldots,\BOW(D_n))^{\top} \in \mathbb{R}^{n \times d}
\] be the input to our transformer, and the output of the transformer will be a real number indicating the answer to our problems. 

\begin{definition}
A \emph{transformer} is a mapping $\TF: \mathbb{R}^{n \times d} \rightarrow \mathbb{R}$ specified by a attention unit $A_{Q,K,V}$ and two MLPs $\varphi_1: \mathbb{R}^{n\times d} \rightarrow \mathbb{R}^{n\times d_{\inn}},\varphi_2: \mathbb{R}^{n \times d_{\out}} \rightarrow \mathbb{R}$. Upon an embedding matrix $E \in \mathbb{R}^{n \times d}$, the transformer  outputs $\varphi_2(A_{Q,K,V}(\varphi_1(E)))$.
\end{definition}

To emphasize, this is a very simplified model of a transformer with a single attention unit. We say that a transformer $\TF$ solves a problem whose input is a matrix $E \in \mathbb{R}^{n \times d}$ if $\TF(E)$ is the answer of the problem on instance $E$. For example, for decision problems like $\MSD_{n,d,t}$, we say $\TF$ solves $\MSD_{n,d,t}$ if for all input $v_1,\ldots,v_n$ and $E$ such that $E_{i,:} = v_i$ for all $i$, $\TF(E) = 1$ if there exists a pair $\frac{\langle v_i,v_j\rangle}{\|v_i\|\cdot \|v_j\|} \geq t$ and $\TF(E) = 0$ otherwise.

\subsection{Fine-Grained Complexity}
\label{sec: fine-grained complexity}

We first introduce some common notions from fine-grained complexity.
Many proofs in this section are deferred to  \Cref{sec: detailed FGC}, and we also refer the reader to \Cref{sec: detailed FGC} for a more detailed introduction to fine-grained complexity.

In fine-grained complexity, one is usually interested in whether we can improve the running time of our algorithms by a polynomial factor. For example, the $\OV$ problem (defined below) has a straightforward quadratic time (ignoring logarithmic factors) algorithm, and it is a major open problem to determine whether there exists a faster, $O(n^{1.99})$ time algorithm. We say that an algorithm is \emph{truly subquadratic} \footnote{Usually in fine-grained complexity, a subquadratic time algorithm means the algorithm runs in time $o(n^2)$, and a truly subquadratic time algorithm means the algorithm runs in time $O(n^{2-\varepsilon})$ for a fixed constant $\varepsilon>0$. For example, $O(n^{2} / \log n)$ is subquadratic but not truly subquadratic. By contrast, in the context of fast attention, prior machine learning literature often just calls an approach subquadratic to mean that it is truly subquadratic, or often even almost linear time. In this work, we will use the fine-grained complexity definition and refer to such approaches as truly subquadratic.} if it runs in time $O(n^{2-\varepsilon})$ for some constant $\varepsilon>0$. In this work, all our problems have quadratic time solutions, and we are interested in whether truly subquadratic time algorithms exist.

A key technique (that we will use frequently in this work) in fine-grained complexity is the \emph{fine-grained reduction}, which is a way to connect the running times of different problems. If problems $\mathcal{P}$ and $\mathcal{Q}$ both have quadratic time algorithms, we say that $\mathcal{P}$ \emph{reduces to} $\mathcal{Q}$ (sometimes we would also say $\mathcal{P}$ is easier than $\mathcal{Q}$ or $\mathcal{Q}$ is harder than $\mathcal{P}$) if a truly subquadratic time algorithm for $\mathcal{Q}$ implies a truly subquadratic time algorithm for $\mathcal{P}$. We say $\mathcal{P}$ and $\mathcal{Q}$ are \emph{subquadratic equivalent} if they reduce to each other, i.e. there is a truly subquadratic time algorithm for $\mathcal{P}$ if and only if there is a truly subquadratic time algorithm for $\mathcal{Q}$. Such relationships are proved by careful reductions, and we will see many examples soon.

\subsubsection{Hardness Conjectures: $\SETH$ and $\OVC$}

We restate our central hardness conjecture here.

\begin{definition}[Strong Exponential Time Hypothesis ($\SETH$)]
For any $\varepsilon>0$, there exists a positive integer $k$ such that $k\SAT$ requires $\Omega(2^{(1-\varepsilon)n})$ time, where $n$ is the number of variables in the CNF.
\end{definition}

$\SETH$ has been one of the biggest open problem in fine-grained complexity, and one of the reasons is that $\SETH$ being true would imply $\PP \neq \NP$. See \Cref{sec: main results} for a detailed explanation of its significance and why many people believe that it is true.

We also introduce the Orthogonal Vectors problem, which is an important problem in fine-grained complexity and will be a key intermediate problem in some of our proofs.

\begin{definition}[Orthogonal Vectors ($\OV_{n,\ell}$)]
Given binary vectors $v_1,\ldots,v_n \in \{0,1\}^\ell$, $\OV_{n,\ell}$ asks to determine if there exists a pair $i \neq j$ such that $\langle v_i,v_j\rangle = 0$.
\end{definition}

Much effort has been made to give a truly subquadratic algorithm for $\OV_{n,c\log n}$ for all $c$, but none has succeeded. Therefore, \cite{Williams05} proposed the Orthogonal Vectors Conjecture, which asserts that such an algorithm does not exist.

\begin{conjecture}[$\OVC$]
For any $\varepsilon>0$, there exists a constant $c>0$ such that $\OV_{n,c\log n}$ cannot be solved in $O(n^{2-\varepsilon})$ time.
\end{conjecture}

\cite{Williams05} showed that assuming $\SETH$ is true, then $\OVC$ is true (the other direction is unknown). Our paper will use these two conjectures interchangeably such that our hardness results can be obtained from either conjecture.

There is also a bichromatic version of $\OV_{n,\ell}$ where one is given two sets of vectors $A = \{a_1,\ldots,a_n\}, B = \{b_1,\ldots,b_n\}$ such that $a_i,b_j \in \{0,1\}^\ell$ and wants to determine if there exists $i,j$ such that $\langle a_i,b_j\rangle = 0$. In fact, these two problems are subquadratic equivalent (see \Cref{lem: bichromatic OV equivalent to OV} for proof). 

\subsubsection{Minimum Inner Product}

In this section we introduce the minimum inner product problem, an important problem related to similarity search.

\begin{definition}[$\MinIP$]
Given a set of binary vectors $v_1,\ldots,v_n \in \{0,1\}^\ell$, $\MinIP_{n,\ell}$ asks to find one pair of $1 \leq i,j \leq n, i \neq j$ such that $\langle v_i,v_j\rangle$ is minimum.  
\end{definition} 

Sometimes we are happy with finding a pair of vectors whose inner product is close enough to optimal, so we also introduce the approximate $\MinIP$ problem as follows.

\begin{definition}[$\gamma\text{-}\MinIP$]
Given a set of binary vectors $v_1,\ldots,v_n \in \{0,1\}^\ell$, $\gamma\text{-}\MinIP_{n,\ell}$ asks to find one pair of $1 \leq i,j \leq n, i \neq j$ such that $\langle v_i,v_j\rangle$ is a $\gamma$-approximation of the minimal inner product.
\end{definition}

It is not hard to see that $\MinIP$ and $\gamma\text{-}\MinIP$ are both at least as hard as $\OV$ for any $\gamma \geq 1$ (just find the minimum inner product and see if it is $0$, and any multiplicative approximation of $0$ must be $0$). Therefore, assuming $\OVC$, for any $\varepsilon>0$ there exists $c>0$ such that $\MinIP_{n,c\log n}$ cannot be solved in $O(n^{2-\varepsilon})$ time. 

In addition, there is a decision version of $\MinIP$, by which we denote $\MinIP_{n,\ell,t}$, where one wants to know whether there exists a pair of vectors whose inner product is at most $t$ for some $0 \leq t\leq \ell$.

\begin{definition}[$\MinIP$ decision version]
Given a set of binary vectors $v_1,\ldots,v_n \in \{0,1\}^\ell$ and $0 \leq t \leq \ell$, $\MinIP_{n,\ell,t}$ asks to determine if there exists one pair of $1 \leq i,j \leq n, i \neq j$ such that $\langle v_i,v_j\rangle \leq t$. 
\end{definition}

The bichromatic versions of these problems can be defined analogously: given two sets $A,B$ with vectors in $\{0,1\}^\ell$, one needs to find $i,j$ that achieves (for bichromatic $\MinIP_{n,\ell}$) or approximates (for bichromatic $\gamma\text{-}\MinIP_{n,\ell}$) the minimal $\langle a_{i},b_{j}\rangle$. One can obtain a truly subquadratic algorithm for all three problems above given a truly subquadratic algorithm for their bichromatic versions (see \Cref{lem: MinIP reduces to bichromatic MinIP} for proof).

\subsubsection{Maximum Inner Product}

One can analogously define $\MaxIP$ and its variants; see \Cref{sec:appendeixmaxip} for the formal definitions.

It is less obvious whether $\MaxIP$ is a harder problem than $\OV$ or not. The answer is positive, see \Cref{lem: OV reduces to bichromatic MaxIP} for a simple proof.

In fact, \cite{KM20} proved a stronger statement which says that even approximate $\MaxIP_{n,\ell}$ is harder than $\OV_{n,\ell}$ for some approximation factor.

\subsection{Document Similarity Problems}
\label{sec: similar documents}

In this section we formally define the $\MSD,\LSD$ problems that we will study. First we define the $\MSD$ variants, which are defined similarly to $\MaxIP$ variants.

\begin{definition}[$\MSD$]
Given $n$ document embeddings $v_1,\ldots,v_n \in \{0,1\}^\ell$, $\MSD_{n,\ell}$ asks to find $1 \leq i,j \leq n, i \neq j$ such that $\frac{\langle v_{i},v_{j}\rangle}{\|v_i\|\cdot \|v_j\|}$ is the maximum. \footnote{In all versions of $\MSD$ and $\LSD$, we assume that there are no zero vectors.}
\end{definition}

Even though $\MSD$ looks similar to $\MaxIP$, notice that they are not the same problem because of normalization. For example, $v = (1,1,\ldots,1) \in \mathbb{R}^{\ell}$ and $w = (1,1,\ldots,1,0,\ldots,0) \in \mathbb{R}^\ell$ where $w$ has $\ell/2$ ones have a very large inner product but they might not be considered similar. In contrast, $v' = (1,1,\ldots,1,0,\ldots,0) \in \mathbb{R}^\ell$ where $v'$ has $10$ ones and $w' = (0,1,\ldots,1,0,\ldots,0) \in \mathbb{R}^\ell$ where $w'$ has $10$ ones have a inner product of $9$ but they are very similar in terms of cosine similarity.

\begin{definition}[$\gamma\text{-}\MSD$]
Given $n$ document embeddings $v_1,\ldots,v_n\in \{0,1\}^\ell$, $\gamma\text{-}\MSD_{n,\ell}$ asks to find $1 \leq i^{*},j^{*} \leq n, i^{*} \neq j^{*}$ such that
\[
\frac{1}{\gamma}\cdot \max_{1 \leq i,j \leq n}\frac{\langle v_{i},v_{j}\rangle}{\|v_i\|\cdot \|v_j\|} \leq \frac{\langle v_{i^{*}},v_{j^{*}}\rangle}{\|v_{i^{*}}\|\cdot \|v_{j^{*}}\|} \leq \max_{1 \leq i,j \leq n}\frac{\langle v_{i},v_{j}\rangle}{\|v_i\|\cdot \|v_j\|}.
\]
\end{definition}

\begin{definition}[$\MSD$ decision version]
Given $n$ document embeddings $v_1,\ldots,v_n \in \{0,1\}^\ell$, $\MSD_{n,\ell,t}$ asks to determine if there exists $1 \leq i,j \leq n, i \neq j$ such that $\frac{\langle v_{i},v_{j}\rangle}{\|v_i\|\cdot \|v_j\|} \geq t$.
\end{definition}

It is not hard to see that $\gamma\text{-}\MSD_{n,\ell}$ and $\MSD_{n,\ell,t}$ are both easier than $\MSD$. In addition, notice that the number of possible $t$ can be considered as discrete because there could only be $O(\ell^3)$ possible values of $\frac{\langle v_i,v_j\rangle}{\|v_i\|\cdot \|v_j\|}$. As a result, the existence of truly subquadratic time algorithm for $\MSD_{n,\ell,t}$ for all $t \in [0,1]$ would imply a truly subquadratic time algorithm for $\MSD_{n,\ell}$ using binary search.

Bichromatic versions of these problems can be defined analogously and the proof of Lemma \ref{lem: MinIP reduces to bichromatic MinIP} again tells us that bichromatic versions are harder. 

One can analogously define $\LSD$ and its variants; see \Cref{sec:appendixLSD} for the formal definitions.

\section{Hardness of Document Similarity}

In this section, we show that assuming $\SETH$ or $\OVC$, for any $\varepsilon>0$, there exists a constant $c>0$ (only depends on $\varepsilon$) such that many variants of $\LSD_{n,c\log n},\MSD_{n,c\log n}$ require $O(n^{2-\varepsilon})$ time. 

\begin{theorem}
\label{thm: approximate LSD is hard}
Assuming $\SETH$ or $\OVC$, for every $\varepsilon>0$, there exists a constant $c>0$ such that $\gamma\text{-}\LSD_{n,\ell}$ cannot be solved in $O(n^{2-\varepsilon})$ time for any $\gamma \geq 1$ when $\ell = c\log n$.
\end{theorem}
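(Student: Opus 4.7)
\emph{Proof plan.} The plan is to reduce $\OV_{n,\ell}$ to $\gamma\text{-}\LSD_{n,\ell}$ directly, preserving both the number of vectors $n$ and the dimension $\ell$ up to constants. Then invoking $\OVC$ (which follows from $\SETH$ by Williams's theorem stated in the preliminaries) immediately yields the claim: for any $\varepsilon>0$, picking the constant $c$ from $\OVC$ for this $\varepsilon$ gives the desired hardness of $\gamma\text{-}\LSD_{n,c\log n}$.

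The crucial observation, which is what makes normalization harmless in the lower-bound direction, is that the cosine similarity $\frac{\langle v_i, v_j\rangle}{\|v_i\|\|v_j\|}$ equals $0$ if and only if $\langle v_i, v_j\rangle = 0$ (provided both vectors are nonzero). So on an $\OV$ instance where no zero vector is present, the minimum cosine similarity is exactly $0$ when there is an orthogonal pair, and strictly positive otherwise. Moreover, any multiplicative $\gamma$-approximation of a minimum equal to $0$ is forced to equal $0$, since $\gamma\cdot 0 = 0$. This means a $\gamma\text{-}\LSD$ algorithm, when the true minimum is $0$, must return an actually orthogonal pair, regardless of $\gamma$.

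Concretely, given an $\OV_{n,\ell}$ instance $v_1,\ldots,v_n \in \{0,1\}^\ell$, I would first scan the list in $O(n\ell)$ time to check for an all-zero vector; if one exists, report ``yes'' (since it is orthogonal to any other vector). Otherwise, feed $v_1,\ldots,v_n$ directly as an $\gamma\text{-}\LSD_{n,\ell}$ instance, obtain the returned pair $(i^*,j^*)$, compute $\langle v_{i^*}, v_{j^*}\rangle$ in $O(\ell)$ time, and report ``yes'' iff this inner product is $0$. By the observation above, this correctly decides $\OV$. If $\gamma\text{-}\LSD_{n,\ell}$ admitted an $O(n^{2-\varepsilon})$ algorithm for every constant $c$ in the choice $\ell = c\log n$, then so would $\OV_{n,c\log n}$, contradicting $\OVC$.

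There is not really a hard step here; the whole content is the reduction and the observation that $\gamma$-approximation cannot escape the value $0$. The one point worth flagging is the choice of the constant $c$: the reduction is dimension-preserving, so the $c$ we use for $\gamma\text{-}\LSD$ is exactly the $c$ furnished by $\OVC$ for the given $\varepsilon$, and no blow-up in $\ell$ (which could otherwise weaken the lower bound) occurs. A minor side remark is that the statement asks only for $\gamma \geq 1$, and the argument is completely uniform in $\gamma$, so the same proof covers all such $\gamma$ simultaneously.
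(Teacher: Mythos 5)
Your proof is correct and takes essentially the same approach as the paper: both handle the all-zero vector case first, then reduce $\OV_{n,\ell}$ directly to $\gamma\text{-}\LSD_{n,\ell}$ using the observation that a multiplicative $\gamma$-approximation of a minimum cosine similarity equal to $0$ must itself be $0$, and both note the reduction is dimension-preserving so the constant $c$ from $\OVC$ transfers unchanged.
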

\begin{proof}
Assume by contradiction that there exists an algorithm $\mathcal{A}$ for $\gamma\text{-}\LSD_{n,c\log n}$ that runs in time $O(n^{2-\varepsilon})$ for some $\gamma \geq 1, \varepsilon>0$ and any constant $c>0$. We show that $\OV_{n,c\log n}$ can be solved in time $O(n^{2-\varepsilon})$ for any constant $c>0$, which refutes $\OVC$ and $\SETH$. 

Given vectors $v_1,\ldots,v_n \in \{0,1\}^\ell$ where $\ell = c\log n$ for any constant $c$, if any vector is the zero vector (we can check in time $O(n\ell)$), then output yes. Otherwise we run $\mathcal{A}$ on $v_1,\ldots,v_n$ to compute $i^{*},j^{*}$ such that  
\[
\min_{1 \leq i,j \leq n}\frac{\langle v_i,v_j\rangle}{\|v_i\|\cdot \|v_{j}\|} \leq \frac{\langle v_{i^{*}},v_{j^{*}}\rangle}{\|v_{i^{*}}\|\cdot \|v_{j^{*}}\|} \leq \gamma\cdot \min_{1 \leq i,j \leq n}\frac{\langle v_i,v_j\rangle}{\|v_i\|\cdot \|v_{j}\|}.
\] Observe that $\displaystyle\min_{1 \leq i,j \leq n}\frac{\langle v_i,v_j\rangle}{\|v_i\|\cdot \|v_{j}\|} = 0$ if and only if there exists a pair of orthogonal vectors, which implies that there exists a pair of orthogonal vectors if and only if $\mathcal{A}$ outputs a pair of orthogonal vectors. The total amount of time needed for $\OV_{n,\ell}$ is therefore $O(n\ell+n^{2-\varepsilon}) =O(n^{2-\varepsilon})$, which refutes $\SETH$.
\end{proof}

Since $\gamma\text{-}\LSD_{n,\ell}$ is easier than $\LSD_{n,\ell}$ and bichromatic $\gamma\text{-}\LSD_{n,\ell}$, the same lower bound applies to these two problems as well. In addition, there must exist $t \in [0,1]$ such that $\LSD_{n,\ell,t}$ cannot be solved in $O(n^{2-\varepsilon})$ time when $\ell = c\log n$ because otherwise that would imply a $O(n^{2-\varepsilon})$ time algorithm for $\LSD_{n,\ell}$ using binary search.

\begin{corollary}
\label{cor: variants of LSD are OV hard}
Assuming $\SETH$ or $\OVC$, for every $\varepsilon>0$, there exists a constant $c>0$ such that $\LSD_{n,d}$ cannot be solved in $O(n^{2-\varepsilon})$ time when $\ell \geq c\log n$. Moreover, the same lower bound holds for bichromatic $\gamma\textup{-}\LSD_{n,\ell}$ for all $\gamma \geq 1$ and $\LSD_{n,\ell,t}$ for some $t \in [0,1]$.
\end{corollary}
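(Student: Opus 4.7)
The plan is to derive all three claims of the corollary as direct consequences of \Cref{thm: approximate LSD is hard}, which already handles the hardness of $\gamma\text{-}\LSD_{n,\ell}$ when $\ell = c\log n$. The overall strategy is to argue that each of the three problems named in the corollary is at least as hard as $\gamma\text{-}\LSD_{n,\ell}$ for a suitable value of $\gamma$, so any truly subquadratic algorithm for one of them would refute the theorem.

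For exact $\LSD_{n,\ell}$, the reduction is immediate: an exact solution is in particular a $1$-approximation, and every $\gamma \geq 1$ is allowed in \Cref{thm: approximate LSD is hard}, so a truly subquadratic algorithm for $\LSD_{n,\ell}$ would immediately yield one for $1\text{-}\LSD_{n,\ell}$ (i.e.\ exact $\LSD_{n,\ell}$), contradicting the theorem. For bichromatic $\gamma\text{-}\LSD_{n,\ell}$, I would reuse the recursion trick from the proof of \Cref{lem: MinIP reduces to bichromatic MinIP}: given a monochromatic instance $V$, split it into two halves $V_1, V_2$, invoke the hypothetical bichromatic $\gamma\text{-}\LSD$ algorithm on the pair $(V_1, V_2)$, and recurse on $V_1$ and $V_2$ to cover the within-half pairs. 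The minimizing pair lies in one of these three groups, so the best candidate returned is a $\gamma$-approximation of the monochromatic minimum, and the geometric series $\sum_i (n/2^i)^{2-\varepsilon}$ keeps the total running time truly subquadratic, again contradicting \Cref{thm: approximate LSD is hard}.

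For the decision version $\LSD_{n,\ell,t}$, I would use the contrapositive together with binary search, following the discussion in \Cref{sec: similar documents}. The key observation is that the quantity $\frac{\langle v_i, v_j\rangle}{\|v_i\|\cdot\|v_j\|}$ takes only $O(\ell^3) = O(\log^3 n)$ distinct values over all pairs, so the set of thresholds $t$ that can ever matter is explicitly enumerable (or alternatively reachable by $O(\log \ell)$ binary search steps over a grid). Hence if $\LSD_{n,\ell,t}$ were solvable in $O(n^{2-\varepsilon})$ time for \emph{every} $t \in [0,1]$, then binary searching over the $O(\polylog n)$ relevant thresholds would produce the exact minimum in $O(n^{2-\varepsilon}\cdot \polylog n) \subseteq O(n^{2-\varepsilon'})$ time for slightly smaller $\varepsilon'>0$, solving $\LSD_{n,\ell}$ and contradicting the already-established hardness of exact $\LSD$ from the first bullet. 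Thus some $t \in [0,1]$ must be hard.

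None of these steps has real technical content: the exact-harder-than-approximate direction is trivial, the bichromatic-to-monochromatic reduction is an off-the-shelf copy of \Cref{lem: MinIP reduces to bichromatic MinIP}, and the binary-search-over-threshold argument is standard. The only mild subtlety I anticipate is making the binary-search step clean: I need to make sure the $\polylog n$ blowup is absorbed into the $n^{-\varepsilon}$ savings (routine, since $\polylog n = n^{o(1)}$), and that the bichromatic reduction is phrased to preserve the same parameter $\ell = c\log n$ so that the contradiction lands squarely on \Cref{thm: approximate LSD is hard} with the same dimension regime.
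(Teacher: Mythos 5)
Your proposal is correct and follows essentially the same route as the paper: exact $\LSD_{n,\ell}$ is the $\gamma=1$ case of \Cref{thm: approximate LSD is hard}, the bichromatic version inherits hardness via the halving-and-recursing reduction of \Cref{lem: MinIP reduces to bichromatic MinIP}, and the decision version inherits hardness because a truly subquadratic algorithm for every threshold $t$ would give, via binary search over the $O(\ell^3)$ possible cosine-similarity values, a truly subquadratic algorithm for exact $\LSD_{n,\ell}$. The one detail you (and, to be fair, the paper's own terse justification) leave implicit is that the corollary asserts hardness for all $\ell \geq c\log n$ while the theorem is phrased at $\ell = c\log n$; this is resolved by the routine observation that appending zero coordinates leaves all cosine similarities unchanged, so hardness at dimension $c\log n$ transfers to any larger $\ell$.
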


A similar hardness result for $\gamma\textup{-}\MSD_{n,d}$ can be derived with much more complicated techniques. Our proof follows the same idea as Theorem 6.1 of \cite{KM20} which uses graph constructions, and we delay the proof of \Cref{thm: approximate MSD is hard} to \Cref{sec: normalized MaxIP is hard}.

\begin{theorem}
\label{thm: approximate MSD is hard}
Assuming $\SETH$, for every $\varepsilon>0$, there exists a constant $c>0$ such that $\gamma\textup{-}\MSD_{n,\ell}$ cannot be solved in $O(n^{2-\varepsilon})$ time when 
\[
\ell \geq (\log n)^{\frac{c\log n}{(\log\log n)^2}} \textup{ and } \gamma \leq \Big(1+\frac{1}{\log\log n}\Big)^{\frac{\log n}{(\log\log n)^2}}.
\]
\end{theorem}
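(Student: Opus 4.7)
I would reduce bichromatic $\OV$ to $\gamma\text{-}\MSD_{n,\ell}$ by a gap-amplification construction in the spirit of \cite{KM20}, Theorem 6.1, arranged so that the amplified vectors all have the \emph{same} Hamming weight; cosine similarity is then just a rescaled inner product, and any $\MaxIP$ gap transfers verbatim to $\MSD$. Combined with \Cref{lem: bichromatic OV equivalent to OV} and the fact that $\SETH$ implies $\OVC$, this gives the claimed lower bound by the same template as the proof of \Cref{thm: approximate LSD is hard}.

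The construction has two phases. First, I would homogenize a bichromatic $\OV_{n,c\log n}$ instance $(A,B)$ by bucketing each side by Hamming weight into $O(\log n)$ classes, as in the proof of \Cref{lem: OV reduces to bichromatic MaxIP}; handling each bucket pair separately reduces to the case where all $A$-side vectors share a common weight $w_A$ and all $B$-side vectors share a common weight $w_B$. Second, I would apply the \cite{KM20} recursive graph-product amplification for $r$ levels: at each level, new vectors are built by composing the previous level's vectors with a small graph-product gadget so that (a) orthogonal and non-orthogonal base pairs have their inner-product ratio multiplied by $(1+1/\log\log n)$, (b) the dimension grows by a $(\log n)^{O(1)}$ factor, and (c) every amplified vector retains a uniform, design-determined Hamming weight inherited from the uniform base weights. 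Choosing $r = \Theta(\log n / (\log\log n)^2)$ makes the final dimension $(\log n)^{\Theta(r)} = (\log n)^{c\log n/(\log\log n)^2}$ and the multiplicative gap $\gamma = (1+1/\log\log n)^{\Theta(r)} = (1+1/\log\log n)^{\log n/(\log\log n)^2}$, matching the theorem exactly. Since all top-level vectors share a single norm $\sqrt{W}$, cosine similarity equals $\langle \cdot,\cdot\rangle/W$, so a hypothetical $O(n^{2-\varepsilon})$ algorithm for $\gamma\text{-}\MSD_{n,\ell}$ would distinguish yes/no cases of the original $\OV$ instance in time $O(n^{2-\varepsilon'})\poly(\log n)$, contradicting $\OVC$.

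The main obstacle is designing the amplification gadget so that (a)--(c) hold simultaneously. A plain tensor product fails on (a): an inner product of $0$ raised to any power is still $0$ and of $1$ is still $1$, so the multiplicative gap is not amplified at all. The \cite{KM20} construction instead uses a bipartite graph-product gadget (related to direct products from the PCP literature) whose parameters must be tuned to deliver the $(1+1/\log\log n)$ per-level gap while keeping the per-level dimension blowup polylogarithmic, and adapting it so that the amplified vectors remain uniform in weight is the technical heart of the proof. Once the gadget is in place, the norm-normalization step is immediate, and the final hardness transfer to $\MSD$ via \Cref{lem: bichromatic OV equivalent to OV} and the template of \Cref{thm: approximate LSD is hard} is routine.
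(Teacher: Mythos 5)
Your proposal correctly identifies that a naive tensoring from $\OV$ cannot work (a $0$-versus-$\geq 1$ inner-product gap is not amplified by $v\mapsto v^{\otimes q}$), and correctly points to the \cite{KM20}-style graph-product amplification as the essential hardness ingredient. However, the way you propose to assemble the pieces differs from the paper in a way that leaves your argument with an unfilled gap, and it also misses the point of where tensoring \emph{is} useful. You try to do everything in one monolithic construction: bucket by Hamming weight, then run the graph-product gadget for $r=\Theta(\log n/(\log\log n)^2)$ levels while simultaneously engineering the gadget so that every amplified vector keeps a fixed norm, so that at the end cosine similarity is just a rescaled inner product. You yourself flag the re-engineering of the gadget to preserve uniform weight across all $r$ levels as ``the technical heart'' — this is precisely the part you have not shown how to do, and it is not at all clear that the \cite{KM20} gadget admits such a weight-homogeneous variant while keeping the per-level dimension blowup polylogarithmic.

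The paper avoids that issue entirely by decoupling the three concerns. First, it invokes \Cref{lem: Chen18 Lemma 3.7} and \Cref{thm: KM20 Theorem 6.2 real} (both cited as black boxes; the latter is where the graph-product machinery actually lives) to get hardness of $(1+\tfrac{1}{\log\log n})$-approximate $\MSD$ at polylogarithmic dimension. Second, the passage between $\MaxIP$-type and $\MSD$-type gaps is handled \emph{once} by the simple padding argument of \Cref{lem: additive MaxIP to additive MSD}: pad every vector to a common Hamming weight so that cosine similarity becomes a fixed rescaling of the inner product; this is applied at the base $O(\log n)$-dimensional stage, not threaded through the amplification. Third — and this is the key point you miss — once the gap is multiplicative \emph{in cosine similarity}, plain tensoring does amplify it: \Cref{lem: KM20 Theorem 6.1} uses exactly $\frac{\langle v^{\otimes q},w^{\otimes q}\rangle}{\|v^{\otimes q}\|\|w^{\otimes q}\|}=\bigl(\frac{\langle v,w\rangle}{\|v\|\|w\|}\bigr)^q$ to lift a $(1+\tfrac{1}{\log\log n})$-gap at dimension $(\log n)^k$ to a $(1+\tfrac{1}{\log\log n})^q$-gap at dimension $(\log n)^{kq}$. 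Your objection that ``plain tensor product fails'' is valid only for the $0$-versus-$1$ gap in $\OV$; it does not apply to the normalized, strictly-between-$0$-and-$1$ cosine ratios produced by the graph-product step. Reorganizing your argument along these lines would let you cite the graph-product machinery once, pad once, and then use cheap tensoring for the super-polylog dimension and gap blowup, rather than re-deriving and modifying a delicate construction for $\Theta(\log n/(\log\log n)^2)$ levels.
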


Similarly, the hardness result also applies to harder problems including $\MSD_{n,\ell}$ and bichromatic $\gamma\textup{-}\MSD_{n,\ell}$.

\begin{corollary}
\label{cor: variants of MSD are OV hard}
Assuming $\SETH$ or $\OVC$, for every $\varepsilon>0$, there exists a constant $c>0$ such that $\MSD_{n,\ell}$ cannot be solved in $O(n^{2-\varepsilon})$ time when $\ell \geq (\log n)^{\frac{c\log n}{(\log\log n)^2}}$. Moreover, the same lower bound holds for bichromatic $\gamma\textup{-}\MSD_{n,\ell}$ for all $1 \leq \gamma \leq (1+\frac{1}{\log\log n})^{\frac{\log n}{(\log\log n)^2}}$ and $\MSD_{n,\ell,t}$ for some $t \in [0,1]$.
\end{corollary}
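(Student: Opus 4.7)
The plan is to deduce each of the three claims from Theorem \ref{thm: approximate MSD is hard}, following the same template used in Corollary \ref{cor: variants of LSD are OV hard} together with the recursive halving reduction of Lemma \ref{lem: MinIP reduces to bichromatic MinIP}. Since Theorem \ref{thm: approximate MSD is hard} already gives hardness for $\gamma\textup{-}\MSD_{n,\ell}$ in the prescribed dimension and approximation regime, each of the three target problems only needs to be shown at least as hard as $\gamma\textup{-}\MSD_{n,\ell}$.

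First I would dispatch $\MSD_{n,\ell}$ and the bichromatic version. An exact algorithm for $\MSD_{n,\ell}$ automatically solves $\gamma\textup{-}\MSD_{n,\ell}$ for every $\gamma \geq 1$, so a truly subquadratic algorithm at $\ell \geq (\log n)^{c\log n/(\log\log n)^2}$ would immediately refute Theorem \ref{thm: approximate MSD is hard}. For bichromatic $\gamma\textup{-}\MSD_{n,\ell}$, the halving reduction from the proof of Lemma \ref{lem: MinIP reduces to bichromatic MinIP} applies verbatim with cosine similarity in place of inner product: given a monochromatic instance $V$, partition $V = V_1 \cup V_2$ into equal halves, call the bichromatic solver on $(V_1,V_2)$ to handle cross pairs, and recurse on $V_1$ and $V_2$. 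The geometric sum $\poly(\ell)\sum_{i\geq 0}(n/2^i)^{2-\varepsilon}$ introduces only an $O(\log n)$ overhead, so a truly subquadratic bichromatic algorithm would yield a truly subquadratic monochromatic one for exactly the range of $\gamma$ covered by Theorem \ref{thm: approximate MSD is hard}.

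For the threshold version $\MSD_{n,\ell,t}$, the key observation is the discreteness of cosine similarity on binary vectors: for $v_i, v_j \in \{0,1\}^\ell$ the numerator $\langle v_i,v_j\rangle$ is an integer in $\{0,\ldots,\ell\}$ and each squared norm is an integer in $\{1,\ldots,\ell\}$, so $\frac{\langle v_i,v_j\rangle}{\|v_i\|\,\|v_j\|}$ takes at most $O(\ell^3)$ distinct values, which can be enumerated in $\poly(\ell)$ time. If $\MSD_{n,\ell,t}$ admitted a truly subquadratic algorithm for every $t \in [0,1]$, scanning or binary-searching these candidate thresholds would determine the maximum cosine similarity, and hence solve $\MSD_{n,\ell}$, in $\poly(\ell)\cdot n^{2-\varepsilon} = n^{2-\varepsilon'}$ time for some $\varepsilon' > 0$ (since $\ell \leq n^{o(1)}$ in this regime, as $(\log n)^{c\log n/(\log\log n)^2} = n^{c/\log\log n}$). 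This contradicts the first part, so some $t \in [0,1]$ must be hard. I do not expect a real obstacle: every step is a short reduction already used in the $\LSD$ and $\MinIP$ analogues, and the only mildly MSD-specific ingredient is the $O(\ell^3)$ bound on distinct cosine similarities, which is immediate from the binary structure of the embeddings.
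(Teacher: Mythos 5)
Your proposal is correct and follows essentially the same route as the paper, which gives this corollary only a one-sentence justification (``the hardness result also applies to harder problems including $\MSD_{n,\ell}$ and bichromatic $\gamma\textup{-}\MSD_{n,\ell}$'') and relies on the earlier discussion in Section~\ref{sec: similar documents} for the threshold and bichromatic cases. You correctly observe that exact $\MSD$ solves $\gamma\text{-}\MSD$ for any $\gamma \geq 1$; that the recursive halving in Lemma~\ref{lem: MinIP reduces to bichromatic MinIP} transfers verbatim to cosine similarity (the optimal pair lands in some bichromatic subcall, and the best over all subcalls preserves the approximation factor); and that the $O(\ell^3)$ bound on distinct cosine-similarity values of binary vectors, combined with $\ell = n^{c/\log\log n} = n^{o(1)}$, makes binary search over thresholds affordable — all of which is exactly the reasoning the paper sketches in the paragraph before Definition of $\LSD$ variants. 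The only thing worth flagging is cosmetic: the paper's (and your) running-time expression $\poly(\ell)\sum_i (n/2^i)^{2-\varepsilon}$ omits the multiplicity $2^i$ of subproblems at recursion depth $i$, but the corrected sum $\sum_i 2^i (n/2^i)^{2-\varepsilon}$ is a convergent geometric series for $\varepsilon < 1$ and still yields $O(n^{2-\varepsilon}\poly(\ell))$, so the conclusion is unaffected.
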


\section{Representational Strength of Transformers}

So far we have seen multiple problems ($\OV,\MinIP,\MaxIP$ and variants of $\MSD,\LSD$) that require quadratic time to solve under certain parameters assuming $\SETH$ or $\OVC$. In this section, we show that $\OV$ and decision versions of $\MSD,\LSD$ can be solved by a transformer with one attention unit in one layer.

Notice that fine-grained reduction does not trivially apply in representational strength of transformers, i.e. two problems might be subquadratic equivalent, but one problem solvable by transformers might not imply that the other one is also solvable by transformers. This is because many techniques that are simple to do on the word-RAM model (where one can do arithmetic operations, find the maximum/minimum over $n$ numbers in constant time) might not be easy to implement in parallel architectures like transformers.

We show that transformers are able to solve $\OV$ with appropriate parameters. The constructions for $\MinIP,\MaxIP,\MSD,\LSD$ are more complicated but follow similar ideas, so we leave them to \Cref{sec: representational stength proofs}.

\begin{theorem}
\label{thm: transformer solves OV}
An attention unit with input and output MLPs with parameters $d = \ell, d_{\inn} = \ell, d_{\out} = 1, m \geq \ell+1$ can solve $\OV_{n,\ell}$.
\end{theorem}
\begin{proof}
Let $v_1,\ldots,v_n \in \{0,1\}^\ell$ be an $\OV_{n,\ell}$ instance; define $v_{n+1} := 0^\ell$ and $X \in \mathbb{R}^{(n+1) \times \ell}$ such that $X_{i,:} = v_i$ for all $i$. Since $m \geq \ell+1$, let $Q,K$ be arbitrary matrices such that $QK^{\top} = -3\log n\cdot I_{\ell}$ and $V \in \mathbb{R}^{\ell \times 1}$ be the all-one matrix. Let $A_{Q,K,V}$ denote this attention head, and $\varphi_1$ be the identity function. 

Let $X$ be the input to the transformer with $A_{Q,K,V}$ as the only attention head. We claim that if there exists a pair $1 \leq i \neq j \leq n$ such that $\langle v_i,v_j\rangle = 0$, then $A_{Q,K,V}(X)$ has an entry which is at least $\frac{1}{n+1}$, and otherwise all entries will be at most $\frac{1}{(n+1)^{1.5}}$. As a result, we can use the second MLP $\varphi_2$ to map $A_{Q,K,V}(X)$ to $1$ if any of its entry is at least $\frac{1}{n+1}$ and $0$ otherwise (see Lemma \ref{lem: second MLP} for a formal proof).
 
We can calculate that the $i$-th entry of $A_{Q,K,V}(X)$ is 
\[
\sum_{j=1}^{n}\frac{\exp(-3\log n\cdot \langle v_i,v_j\rangle)}{\sum_{k=1}^{n}\exp(-3\log n\cdot \langle v_i,v_{k}\rangle)+1}\cdot\|v_j\|_1 = \frac{\sum_{j=1}^{n}n^{-3\langle v_i,v_j\rangle}}{\sum_{k=1}^{n}n^{-3\langle v_i,v_k\rangle}+1}\cdot \|v_j\|_1.
\] As a result, if there exists a pair $\langle v_{i^{*}},v_{j^{*}}\rangle = 0$ with $1 \leq i^{*},j^{*} \leq n$, then the $i^{*}$-th entry of $A_{Q,K,V}(X)$ can be lower bounded as
\[
\frac{\sum_{j=1}^{n}n^{-3\langle v_i,v_j\rangle}}{\sum_{k=1}^{n}n^{-3\langle v_i,v_k\rangle}+1} \geq \frac{n^{-3\langle v_{i^{*}},v_{j^{*}}\rangle}}{n+1} = \frac{1}{n+1}.
\] Otherwise, $\langle v_i,v_j\rangle \geq 1$ for all $i,j$ and thus the $i$-th entry of $A_{Q,K,V}(X)$ is 
\[
\frac{\sum_{j=1}^{n}n^{-3\langle v_i,v_j\rangle}}{\sum_{k=1}^{n}n^{-3\langle v_i,v_k\rangle}+1}\cdot \|v_j\|_1 \leq \sum_{j=1}^{n}\frac{d}{n^{3\langle v_i,v_j\rangle}} \leq n\cdot \frac{d}{n^{3}} < \frac{1}{n^{1.5}}.
\] 
\end{proof}

\bibliography{iclr2025_conference}
\bibliographystyle{iclr2025_conference}

\appendix

\section{A Detailed Introduction to Fine-Grained Complexity}
\label{sec: detailed FGC}

\subsection{Hardness Conjectures: $\SETH$ and $\OVC$}

We restate our central hardness conjecture here.

\begin{definition}[Strong Exponential Time Hypothesis ($\SETH$)]
For any $\varepsilon>0$, there exists a positive integer $k$ such that $k\SAT$ requires $\Omega(2^{(1-\varepsilon)n})$ time.
\end{definition}

$\SETH$ has been one of the biggest open problem in fine-grained complexity, and one of the reasons is that $\SETH$ being true would imply $\PP \neq \NP$. See \Cref{sec: main results} for a detailed explanation of its significance and why many people believe that it is true.

We also introduce the Orthogonal Vectors problem, which is an important problem in fine-grained complexity and will be a key intermediate problem in some of our proofs.

\begin{definition}[Orthogonal Vectors ($\OV_{n,\ell}$)]
Given binary vectors $v_1,\ldots,v_n \in \{0,1\}^\ell$, $\OV_{n,\ell}$ asks to determine if there exists a pair $i \neq j$ such that $\langle v_i,v_j\rangle = 0$.
\end{definition}

Usually researchers focus on the regime when $\ell \ll n$, and thus we will assume that $\ell \leq n^{o(1)}$ throughout this paper. The straightforward algorithm for $\OV_{n,\ell}$ runs in time $O(n^2\ell)$ by simply computing the inner products between each pair of vectors. In very low dimensions, one can employ a folklore recursive approach to obtain a $O(2^\ell+n)$ time algorithm (see~\cite{CST}). When $\ell = c\log n$ for a large constant $c$, all algorithms mentioned above require quadratic time with respect to $n$, but  \cite{ARY15,CW21} gave a slight improvement, showing that for a fixed constant $c>0$, $\OV_{n,c\log n}$ can be solved in time $n^{2-1/O(\log c)}$. This is a truly subquadratic running time for any fixed constant $c$, but becomes quadratic as $c$ grows. It is still unknown whether there exists a $O(n^{2-\varepsilon})$ time algorithm for all constant $c$, where $\varepsilon$ is an absolute constant that does not depend on $c$, and the popular $\OVC$ Conjecture states that no such algorithm exists.

\begin{conjecture}[$\OVC$]
For any $\varepsilon>0$, there exists a constant $c>0$ such that $\OV_{n,c\log n}$ cannot be solved in $O(n^{2-\varepsilon})$ time.
\end{conjecture}

\cite{Williams05} showed that assuming $\SETH$ is true, then $\OVC$ is true (the other direction is unknown). Our paper will use these two conjectures interchangeably such that our hardness results can be obtained from either conjecture.

There is also a bichromatic version of $\OV_{n,\ell}$ where one is given two sets of vectors $A = \{a_1,\ldots,a_n\}, B = \{b_1,\ldots,b_n\}$ such that $a_i,b_j \in \{0,1\}^\ell$ and wants to determine if there exists $i,j$ such that $\langle a_i,b_j\rangle = 0$. In fact, these two problems are subquadratic equivalent. 

\begin{lemma}
\label{lem: bichromatic OV equivalent to OV}
There exists an algorithm for $\OV_{n,\ell}$ that runs in time $O(n^{2-\varepsilon}\cdot \poly(\ell))$ for some $\varepsilon>0$ if and only if there exists an algorithm for bichromatic $\OV_{n,\ell}$ that runs in time $O(n^{2-\varepsilon'}\cdot \poly(\ell))$ for some $\varepsilon'>0$.
\end{lemma}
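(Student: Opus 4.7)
The plan is to prove the two directions of the equivalence separately via short reductions, each preserving $n$ up to a constant factor and $\ell$ up to an additive constant.

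For the easier direction, where a fast monochromatic algorithm yields a fast bichromatic one, I would reduce bichromatic $\OV$ to monochromatic $\OV$ by padding. Given $A = \{a_1, \ldots, a_n\}$ and $B = \{b_1, \ldots, b_n\}$ in $\{0,1\}^\ell$, form $2n$ padded vectors in $\{0,1\}^{\ell+2}$ by setting $a_i' = a_i \circ (0,1)$ and $b_j' = b_j \circ (1,0)$. Two same-side padded vectors have inner product at least $1$ from the trailing two coordinates, while a cross-side pair $(a_i', b_j')$ has inner product exactly $\langle a_i, b_j\rangle$. Hence any orthogonal pair in the combined monochromatic instance must be cross-side, and corresponds directly to an orthogonal bichromatic pair. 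A monochromatic algorithm running in $O(N^{2-\varepsilon}\poly(L))$ time thus solves the bichromatic problem in $O(n^{2-\varepsilon}\poly(\ell))$ time.

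For the harder direction, showing that a fast bichromatic algorithm yields a fast monochromatic one, I would use divide and conquer. Split the monochromatic input $v_1, \ldots, v_n$ into halves $L$ and $R$ of size $n/2$. An orthogonal pair either lies entirely within $L$, entirely within $R$, or has one endpoint on each side; the third case is exactly a bichromatic $\OV$ query on $(L, R)$, and the first two are recursive monochromatic calls. Writing $T(n)$ for the monochromatic runtime and assuming $T_{bi}(m) = O(m^{2-\varepsilon'}\poly(\ell))$, this yields
\[
T(n) \leq 2\, T(n/2) + T_{bi}(n/2) + O(n\ell).
\]
Since bichromatic $\OV$ requires at least linear time to read its input, $\varepsilon' \leq 1$, so $2 < 2^{2-\varepsilon'}$; a standard master-theorem (or direct geometric-series) calculation then shows that the top-level bichromatic work dominates, giving $T(n) = O(n^{2-\varepsilon'}\poly(\ell))$.

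There is no real obstacle beyond verifying that the padding truly eliminates same-side orthogonality and that the geometric series over the recursion levels converges. The only subtle point is that both reductions blow $n$ up by a constant factor, but this does not affect whether a truly subquadratic algorithm exists, since the exponent $2-\varepsilon$ is preserved; hence the two existence statements coincide as claimed.
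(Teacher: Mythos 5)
Your first direction (monochromatic $\Rightarrow$ bichromatic via padding with $(0,1)$ and $(1,0)$ suffixes) is correct and matches the paper's reduction essentially verbatim. Your second direction (bichromatic $\Rightarrow$ monochromatic) is also correct, but you take a more elaborate route than the paper. The paper simply notes that after an $O(n\ell)$ pre-pass that outputs ``yes'' if any $v_i$ is the zero vector (a zero vector is trivially orthogonal to every other vector, so the monochromatic answer is yes), one may assume every vector is nonzero, hence $\langle v_i, v_i\rangle \geq 1$ for all $i$. Setting $A = B = V$ and running the bichromatic algorithm once then suffices: any orthogonal pair it reports must have distinct indices, and conversely any orthogonal distinct pair is found. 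This is a single call with no recursion. Your divide-and-conquer argument gets to the same place, at the cost of a recurrence and a case analysis on $\varepsilon'$ versus $1$; it works, and it has the small aesthetic advantage of never comparing a vector to itself so no zero-vector check is needed, but the extra machinery (master theorem, regularity/geometric-series bound, and the borderline $\varepsilon' = 1$ case where a $\log n$ factor appears and must be absorbed into a smaller exponent) is all avoidable. If you do keep the recursive argument, do spell out how you handle the $\varepsilon' \geq 1$ case, since then the series no longer decays geometrically and you are relying on $n\log n \cdot \poly(\ell)$ being $O(n^{2-\varepsilon''}\poly(\ell))$ rather than on convergence.
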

\begin{proof}
First we assume that there exists an algorithm $\mathcal{A}$ for bichromatic $\OV_{n,\ell}$ that runs in time $O(n^{2-\varepsilon}\cdot \poly(\ell))$ for some $\varepsilon>0$. Given an $\OV_{n,\ell}$ instance $V = \{v_1,\ldots,v_n\}$ with $v_i \in \{0,1\}^\ell$ for all $i$, we first check if any $v_i$ is the all zero vector (if so then it is a yes instance). Otherwise, let $A = B = V$ and run $\mathcal{A}$ on $A,B$. Notice that since there is no all zero vector, $\langle v_i,v_i\rangle \neq 0$ for all $i$, and therefore there exists $i \neq j$ such that $\langle v_i,v_j\rangle = 0$ if and only if there exists $v_i = a \in A, v_j = b \in B$ such that $\langle a,b\rangle = 0$. The total running time is $O(n^{2-\varepsilon}\cdot \poly(\ell)+n\ell) = O(n^{2-\varepsilon}\cdot \poly(\ell))$.

Now we assume that there exists an algorithm $\mathcal{A}'$ for $\OV_{n,\ell}$ that runs in time $O(n^{2-\varepsilon}\cdot \poly(\ell))$ for some $\varepsilon>0$. Let $A = \{a_1,\ldots,a_n\}, B = \{b_1,\ldots,b_n\}$ with vectors $a_i,b_j \in \{0,1\}^\ell$ be a bichromatic $\OV_{n,\ell}$ instance. For each $a_i$ we construct $a_i' = (a_i,1,0) \in \mathbb{R}^{\ell+2}$ and for each $b_j$ we construct $b_j' = (b_j,0,1) \in \mathbb{R}^{\ell+2}$ and let $A' = \{a_1'\,\ldots,a_n'\}, B' = \{b_1',\ldots,b_n'\}$. Notice that now $\langle a_i',a_j'\rangle,\langle b_i',b_j'\rangle \geq 1$ for all $i,j$. As a result, if there exists $\langle a_i,b_j\rangle = 0$, then $\langle a_i',b_j'\rangle = \langle a_i,b_j\rangle+0 = 0$. Conversely, if $\langle v,w\rangle = 0$ for any $v,w \in A'\cup B'$, then it must be the case that $v = a_i'$ and $w = b_j'$ for some $i,j$ or vice versa, which implies that $\langle a_i,b_j\rangle = 0$. Therefore, running $\mathcal{A}'$ on $A'\cup B'$ will tell us whether $A,B$ is a yes instance or not. The total running time required is $O((2n)^{2-\varepsilon}\cdot \poly(\ell+2)) = O(n^{2-\varepsilon}\poly(\ell))$.
\end{proof}

When $\ell = c\log n$ for some constant $c>0$, an algorithm running in $O(n^{2-\varepsilon}\cdot \poly(\ell))$ time for some fixed constant $\varepsilon>0$ is a truly subquadratic algorithm. Therefore, Lemma \ref{lem: bichromatic OV equivalent to OV} implies that assuming $\OVC$, there is no truly subquadratic algorithm for bichromatic $\OV_{n,\ell}$. 

\subsection{Minimum Inner Product} \label{sec:appendeixminip}

In this section we introduce the minimum inner product problem, an important problem related to similarity search.

\begin{definition}[$\MinIP$]
Given a set of binary vectors $v_1,\ldots,v_n \in \{0,1\}^\ell$, $\MinIP_{n,\ell}$ asks to find one pair of $1 \leq i,j \leq n, i \neq j$ such that $\langle v_i,v_j\rangle$ is minimum.  
\end{definition} 

The trivial algorithm for $\MinIP_{n,\ell}$ takes $O(n^2\ell)$ time by enumerating all possible pairs of inner product. Sometimes we are happy with finding a pair of vectors whose inner product is close enough to optimal, so we also introduce the approximate $\MinIP$ problem as follows.

\begin{definition}[$\gamma\text{-}\MinIP$]
Given a set of binary vectors $v_1,\ldots,v_n \in \{0,1\}^\ell$, $\gamma\text{-}\MinIP_{n,\ell}$ asks to find one pair of $1 \leq i,j \leq n, i \neq j$ such that $\langle v_i,v_j\rangle$ is a $\gamma$-approximation of the minimal inner product.
\end{definition}

It is not hard to see that $\MinIP$ and $\gamma\text{-}\MinIP$ are both at least as hard as $\OV$ for any $\gamma \geq 1$ (just find the minimum inner product and see if it is $0$, and any multiplicative approximation of $0$ must be $0$). Therefore, assuming $\OVC$, for any $\varepsilon>0$ there exists $c>0$ such that $\MinIP_{n,c\log n}$ cannot be solved in $O(n^{2-\varepsilon})$ time. 

In addition, there is a decision version of $\MinIP$, by which we denote $\MinIP_{n,\ell,t}$, where one wants to know whether there exists a pair of vectors whose inner product is at most $t$ for some $0 \leq t\leq \ell$.

\begin{definition}[$\MinIP$ decision version]
Given a set of binary vectors $v_1,\ldots,v_n \in \{0,1\}^\ell$ and $0 \leq t \leq \ell$, $\MinIP_{n,\ell,t}$ asks to determine if there exists one pair of $1 \leq i,j \leq n, i \neq j$ such that $\langle v_i,v_j\rangle \leq t$. 
\end{definition}

$\MinIP_{n,\ell}$ is trivially at least as hard as $\MinIP_{n,\ell,t}$ for any $t$ because if we can calculate the minimum, then we can decide whether it is at most $t$ or not. In addition, notice that when $t \geq \ell+1$ or $t < 0$ the problem is trivial and requires constant time to respond.

The bichromatic versions of these problems can be defined analogously: given two sets $A,B$ with vectors in $\{0,1\}^\ell$, one needs to find $i,j$ that achieves (for bichromatic $\MinIP_{n,\ell}$) or approximates (for bichromatic $\gamma\text{-}\MinIP_{n,\ell}$) the minimal $\langle a_{i},b_{j}\rangle$. One can easily obtain a truly subquadratic algorithm for all three problems above given a truly subquadratic algorithm for their bichromatic versions. 

\begin{lemma}
\label{lem: MinIP reduces to bichromatic MinIP}
Suppose there exists an algorithm $\mathcal{A}$ for bichromatic $\MinIP_{n,\ell}$ that runs in time $O(n^{2-\varepsilon}\cdot \poly(\ell))$ for any $\varepsilon>0$, then there exists an algorithm for $\MinIP_{n,\ell}$ that runs in time $O(n^{2-\varepsilon'}\cdot \poly(\ell))$ for some $\varepsilon'>0$. The same statement is true for $\gamma\text{-}\MinIP_{n,\ell}$ and $\MinIP_{n,\ell,t}$.
\end{lemma}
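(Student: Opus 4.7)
The plan is to use a standard divide-and-conquer reduction. Given an instance $V = \{v_1, \ldots, v_n\}$ of $\MinIP_{n,\ell}$, I would split $V$ into two halves $V_1 = \{v_1, \ldots, v_{n/2}\}$ and $V_2 = \{v_{n/2+1}, \ldots, v_n\}$. Any candidate pair $(i,j)$ with $i \neq j$ falls into exactly one of three categories: both indices lie in $V_1$, both lie in $V_2$, or one lies in each. The first two cases are handled by recursing on $V_1$ and $V_2$, and the cross case is handled by invoking $\mathcal{A}$ on the bichromatic instance $(V_1, V_2)$. The algorithm returns the minimum of the three answers obtained.

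The running time satisfies the recurrence $T(n) = 2T(n/2) + O(n^{2-\varepsilon}\poly(\ell))$, and since $2 - \varepsilon > 1 = \log_2 2$, the Master Theorem yields $T(n) = O(n^{2-\varepsilon}\poly(\ell))$, so the exponent is preserved (perhaps losing only a constant factor). For the base case of small $n$, one can just brute-force in $O(\poly(\ell))$ time.

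For the approximate variant $\gamma\text{-}\MinIP_{n,\ell}$, the same recursion works: the minimum over three $\gamma$-approximations (each of a disjoint portion of the candidate pairs) is itself a $\gamma$-approximation of the global minimum, so returning the best of the three outputs preserves the approximation ratio. For the decision variant $\MinIP_{n,\ell,t}$, the same recursion returns \textbf{yes} if and only if at least one of the three subproblems returns \textbf{yes}, so the bichromatic decision algorithm suffices on the cross subproblem.

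There is essentially no serious obstacle here; the only thing worth being careful about is the bookkeeping that the same exponent $\varepsilon$ carries through the recursion without degradation. The key point is that the non-recursive work $n^{2-\varepsilon}\poly(\ell)$ already dominates the geometric sum $\sum_{k \geq 0} 2^k (n/2^k)^{2-\varepsilon} = n^{2-\varepsilon} \sum_{k \geq 0} 2^{-k(1-\varepsilon)}$, which converges to a constant times $n^{2-\varepsilon}$. Thus one may simply take $\varepsilon' = \varepsilon$.
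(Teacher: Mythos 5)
Your proposal is correct and matches the paper's proof: both use the same divide-and-conquer, splitting $V$ into two halves, calling the bichromatic algorithm on cross pairs, and recursing on each half. Your running-time analysis via the Master Theorem is, if anything, a bit more careful than the paper's (which loosely bounds the geometric sum by a $\log n$ factor before absorbing it), and the treatment of the approximate and decision variants is the same reasoning in both.
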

\begin{proof}
Given a $\MinIP_{n,\ell}$ instance $V = \{v_1,\ldots,v_n\}$, we first partition $V$ into $V_1,V_2$ of equal size and run $\mathcal{A}$ on $V_1,V_2$. This will allow us to find the minimal pair in $V_1 \times V_2$. Now we further partition $V_1$ into two sets of equal size and recurse on $V_1$ to eventually find the minimal pair in $V_1 \times V_1$. Similarly we recurse on $V_2$ to find the minimal pair in $V_2 \times V_2$. The running time of our algorithm is 
\[
\poly(\ell)\cdot \sum_{i=1}^{\log n}O\Big(\Big(\frac{n}{2^i}\Big)^{(2-\varepsilon)}\Big) = O(n^{2-\varepsilon}\cdot \poly(\ell)\cdot \log n) \leq O(n^{2-\varepsilon'}\cdot \poly(\ell))
\] for some $\varepsilon'>0$. The exact same argument holds for $\gamma\text{-}\MinIP_{n,\ell}$ as well because the minimal pair must appear in some recursion where we run our algorithm on. The argument also holds for $\MinIP_{n,\ell,t}$ because we have gone over all possible pairs of vectors.
\end{proof}

\subsection{Maximum Inner Product} \label{sec:appendeixmaxip}

In this section we introduce the maximum inner product problem and its variants. The problems are basically the same as problems in the previous section.

\begin{definition}[$\MaxIP$]
Given a set of binary vectors $v_1,\ldots,v_n \in \{0,1\}^\ell$, $\MaxIP_{n,\ell}$ asks to find one pair of $1 \leq i,j \leq n, i \neq j$ such that $\langle v_i,v_j\rangle$ is maximal.  
\end{definition}

\begin{definition}[$\gamma\text{-}\MaxIP$]
Given a set of binary vectors $v_1,\ldots,v_n \in \{0,1\}^\ell$, $\gamma\text{-}\MaxIP_{n,\ell}$ asks to find one pair of $1 \leq i,j \leq n, i \neq j$ such that $\langle v_i,v_j\rangle$ is a $\gamma$-approximation of the maximal inner product.
\end{definition}

\begin{definition}[$\MaxIP$ decision version]
Given a set of binary vectors $v_1,\ldots,v_n \in \{0,1\}^\ell$, $\MaxIP_{n,\ell,t}$ asks to determine if there exists one pair of $1 \leq i,j \leq n, i\neq j$ such that $\langle v_i,v_j\rangle \geq t$.
\end{definition}

The bichromatic versions of these problems can be defined analogously. Similar to $\MinIP$, one can easily obtain a truly subquadratic algorithm for all three problems above given a truly subquadratic algorithm for their bichromatic versions again using the proof of Lemma \ref{lem: MinIP reduces to bichromatic MinIP}.

\begin{lemma}
Suppose there exists an algorithm $\mathcal{A}$ for bichromatic $(\gamma\text{-})\MaxIP_{n,\ell}$ that runs in time $O(n^{2-\varepsilon}\cdot \poly(\ell))$ for any $\varepsilon>0$, then there exists an algorithm for $(\gamma\text{-})\MaxIP_{n,\ell}$ that runs in time $O(n^{2-\varepsilon'}\cdot \poly(\ell))$ for some $\varepsilon'>0$.
\end{lemma}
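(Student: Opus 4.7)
The plan is to mimic the proof of Lemma \ref{lem: MinIP reduces to bichromatic MinIP} almost verbatim, since the only structural property that proof uses is that the optimum over all pairs can be broken into an optimum over same-side pairs and an optimum over cross-side pairs, and this is equally true for maximum as for minimum. Concretely, given a monochromatic $(\gamma\text{-})\MaxIP_{n,\ell}$ instance $V=\{v_1,\ldots,v_n\}$, I would partition $V$ into two halves $V_1,V_2$ of size $n/2$, invoke the assumed bichromatic algorithm $\mathcal A$ on $(V_1,V_2)$ to recover the best cross-pair, and then recurse on $V_1$ and $V_2$ to obtain the best same-side pairs. Since every pair $\{v_i,v_j\}$ with $i\neq j$ appears in exactly one call along this recursion tree, the overall maximum of the three candidates is the correct answer for $\MaxIP_{n,\ell}$.

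For the approximate version $\gamma\text{-}\MaxIP_{n,\ell}$, the same recursion works: the truly optimal pair lives in exactly one of the three subcases, and $\mathcal A$ returns a $\gamma$-approximation of the optimum in that subcase, which is automatically a $\gamma$-approximation of the global optimum (note that taking the maximum of a $\gamma$-approximation with other values can only improve the approximation ratio). For the decision version $\MaxIP_{n,\ell,t}$, the answer is the logical OR of the three subcases, which again is correct.

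The running time analysis is a standard geometric series:
\[
T(n) \leq 2 T(n/2) + O\bigl(n^{2-\varepsilon}\poly(\ell)\bigr) = \poly(\ell)\sum_{i=0}^{\log n}2^i\cdot O\bigl((n/2^i)^{2-\varepsilon}\bigr) = O\bigl(n^{2-\varepsilon}\poly(\ell)\bigr),
\]
since the sum is dominated by its first term (the factor $2^i \cdot (n/2^i)^{2-\varepsilon} = n^{2-\varepsilon}/2^{i(1-\varepsilon)}$ decays geometrically for $\varepsilon<1$). Absorbing any logarithmic slack into a slightly smaller exponent gives the desired $O(n^{2-\varepsilon'}\poly(\ell))$ bound for any $\varepsilon'<\varepsilon$.

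There is essentially no real obstacle here; the argument is a direct transcription of the $\MinIP$ reduction, and the only thing to be a little careful about is checking that the approximation-factor argument for $\gamma\text{-}\MaxIP$ still goes through (it does, because taking a maximum of values, one of which is within factor $\gamma$ of the true maximum, yields a value within factor $\gamma$ of the true maximum). For this reason I would likely just state the result and note that the proof is identical to that of Lemma \ref{lem: MinIP reduces to bichromatic MinIP}, with $\min$ replaced by $\max$ throughout.
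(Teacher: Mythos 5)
Your proposal is correct and matches the paper's intended argument: the paper states this lemma without a separate proof, noting only that it follows from the same recursive partitioning used for Lemma~\ref{lem: MinIP reduces to bichromatic MinIP}, which is exactly what you do. Your running-time accounting, which includes the $2^i$ factor for the number of subproblems at depth $i$ and bounds the resulting geometric sum, is in fact a bit cleaner than the paper's informal tally for the $\MinIP$ case.
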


It is less obvious whether $\MaxIP$ is a harder problem than $\OV$ or not. The answer is positive, and for bichromatic $\MaxIP$, there exists a simple proof.

\begin{lemma}
\label{lem: OV reduces to bichromatic MaxIP}
Suppose there exists an algorithm $\mathcal{A}$ for bichromatic $\MaxIP_{n,\ell}$ that runs in time $O(n^{2-\varepsilon}\cdot \poly(\ell))$ for any $\varepsilon>0$, then there exists an algorithm for $\OV_{n,\ell}$ that runs in time $O(n^{2-\varepsilon}\cdot \poly(\ell))$.
\end{lemma}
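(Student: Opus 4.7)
The plan is to give a reduction from bichromatic $\OV_{n,\ell}$ to one call of the assumed bichromatic $\MaxIP$ algorithm; chaining with \Cref{lem: bichromatic OV equivalent to OV} then yields the claimed algorithm for (monochromatic) $\OV_{n,\ell}$. The key identity to exploit is $\langle v,\bar w\rangle = \|v\|_1 - \langle v,w\rangle$, which turns the orthogonality condition $\langle v,w\rangle=0$ into the condition $\langle v,\bar w\rangle = \|v\|_1$. Thus orthogonality becomes the statement that an inner product attains a specific target value, and if every vector has the same Hamming weight $W$ then this target is simultaneously the global maximum inner product, letting a single $\MaxIP$ call detect orthogonal pairs.

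The first task is weight normalization by padding. Given the input $A=\{a_1,\ldots,a_n\}$ and $B=\{b_1,\ldots,b_n\}$ in $\{0,1\}^\ell$, I build $a_i',b_j'\in\{0,1\}^{3\ell}$ as follows. Let $p_i^A\in\{0,1\}^\ell$ be any fixed vector with exactly $\ell-\|a_i\|_1$ ones and set $a_i' = a_i \circ p_i^A \circ \zo_\ell$; symmetrically, let $p_j^B\in\{0,1\}^\ell$ have $\ell-\|b_j\|_1$ ones and set $b_j' = b_j \circ \zo_\ell \circ p_j^B$. The $A$-side padding occupies only the middle $\ell$ coordinates and the $B$-side padding only the last $\ell$ coordinates, so $\langle a_i',b_j'\rangle = \langle a_i,b_j\rangle$, and by construction $\|a_i'\|_1=\|b_j'\|_1=\ell$ for every $i,j$.

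The second task is to complement one side and invoke $\MaxIP$. Let $c_j = \overline{b_j'}\in\{0,1\}^{3\ell}$; then $\langle a_i',c_j\rangle = \|a_i'\|_1 - \langle a_i',b_j'\rangle = \ell - \langle a_i,b_j\rangle \leq \ell$, with equality if and only if $\langle a_i,b_j\rangle = 0$. I run the assumed bichromatic $\MaxIP$ algorithm $\mathcal{A}$ on the instance $(A',C)$ to obtain a maximizing pair, compute its inner product in $O(\ell)$ time, and return YES iff the value equals $\ell$. Preprocessing takes $O(n\ell)$ time, the $\MaxIP$ call takes $O(n^{2-\varepsilon}\poly(3\ell)) = O(n^{2-\varepsilon}\poly(\ell))$ time, and the total remains within the claimed bound.

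The only delicate point is ensuring that the padding blocks on the two sides lie in disjoint coordinate ranges, so that they contribute nothing to any cross-inner-product $\langle a_i',b_j'\rangle$; the $3\ell$-coordinate layout above achieves this cleanly. Everything else is routine bookkeeping, and no additional hard argument is needed beyond this pad-and-complement trick.
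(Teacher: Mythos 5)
Your reduction is correct, and it takes a somewhat different route from the paper's. The paper reduces monochromatic $\OV$ directly: it partitions the input into weight classes $S_1,\ldots,S_\ell$, flips each $S_j$ to get $\bar S_j$, and then makes $\ell^2$ calls to the bichromatic $\MaxIP$ oracle (one per pair $(S_i,\bar S_j)$), using the identity $\langle v,w\rangle = \|v\|_1 - \langle v,\bar w\rangle$ to note that the maximum inner product between $S_i$ and $\bar S_j$ equals $i$ exactly when $S_i$ and $S_j$ contain an orthogonal pair. You instead normalize weights up front by the pad-into-$3\ell$-coordinates trick, complement one side once, and make a single $\MaxIP$ call, then route monochromatic $\OV$ through bichromatic $\OV$ via \Cref{lem: bichromatic OV equivalent to OV}. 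Both proofs hinge on the same complement identity; the difference is how you make the Hamming weight uniform so that ``inner product attains the weight'' becomes ``inner product is maximal.'' Your version is a bit cleaner (one oracle call rather than $\ell^2$, no extra bookkeeping about bucket sizes), and your padding construction is in fact the same device the paper deploys later in \Cref{lem: additive MaxIP to additive MSD}; the paper's version for this particular lemma avoids changing the dimension from $\ell$ to $3\ell$, which makes no asymptotic difference but keeps the instance in the original ambient space. Either approach suffices, and the running-time accounting you give ($O(n\ell)$ preprocessing plus one $O(n^{2-\varepsilon}\poly(3\ell))$ call, then the exponent-preserving reduction of \Cref{lem: bichromatic OV equivalent to OV}) correctly yields the claimed $O(n^{2-\varepsilon}\poly(\ell))$ bound.
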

\begin{proof}
Given an $\OV_{n,\ell}$ instance $v_1,\ldots,v_n \in \{0,1\}^\ell$, we partition all $v_i$ into subsets $S_1,\ldots,S_\ell$ such that each $S_j$ contains all vectors with $j$ ones. Now for each pair of $1 \leq i,j \leq \ell$, let $\Bar{S}_j$ consists of vectors in $S_j$ but with all the entries flipped. As a result, for any $v \in S_i, w \in \Bar{S}_j$ we have $\langle v,w\rangle = i-\langle v,\Bar{w}\rangle$, which implies that $\langle v,\Bar{w}\rangle = 0$ if and only if $\langle v,w\rangle = i$. Therefore, running $\mathcal{A}$ on $S_i,\Bar{S}_j$ will tell us whether there is an orthogonal pair of vectors in $S_i$ and $S_j$. The total running time is $O(\ell^2\cdot n^{2-\varepsilon}\cdot \poly(\ell)) = O(n^{2-\varepsilon})\cdot \poly(\ell)$.
\end{proof}

In fact, \cite{KM20} proved a stronger statement which says that even approximate $\MaxIP_{n,\ell}$ is stronger than $\OV_{n,\ell}$ for some approximation factor.

\subsection{Least Similar Documents} \label{sec:appendixLSD}

We now formally define $\LSD$ variants, which are defined similarly to $\MinIP$ variants. Recall that $\MSD$ and variants were defined in \Cref{sec: similar documents} above.

\begin{definition}[$\LSD$]
Given $n$ document embeddings $v_1,\ldots,v_n \in \{0,1\}^\ell$, $\LSD_{n,\ell}$ asks to find $1 \leq i,j \leq n, i \neq j$ such that $\frac{\langle v_{i},v_{j}\rangle}{\|v_i\|\cdot \|v_j\|}$ is the minimum.
\end{definition}

\begin{definition}[$\gamma\text{-}\LSD$]
Given $n$ document embeddings $v_1,\ldots,v_n\in \{0,1\}^\ell$, $\gamma\text{-}\MSD_{n,\ell}$ asks to find $1 \leq i^{*},j^{*} \leq n, i^{*} \neq j^{*}$ such that
\[
\min_{1 \leq i,j \leq n}\frac{\langle v_{i},v_{j}\rangle}{\|v_i\|\cdot \|v_j\|} \leq \frac{\langle v_{i^{*}},v_{j^{*}}\rangle}{\|v_{i^{*}}\|\cdot \|v_{j^{*}}\|} \leq \gamma\cdot \min_{1 \leq i,j \leq n}\frac{\langle v_{i},v_{j}\rangle}{\|v_i\|\cdot \|v_j\|}.
\]
\end{definition}

\begin{definition}[$\LSD$ decision version]
Given $n$ document embeddings $v_1,\ldots,v_n \in \{0,1\}^\ell$, $\LSD_{n,\ell,t}$ asks to determine if there exists $1 \leq i,j \leq n, i \neq j$ such that $\frac{\langle v_{i},v_{j}\rangle}{\|v_i\|\cdot \|v_j\|} \leq t$.
\end{definition}

$\gamma\text{-}\LSD_{n,\ell}$ and $\LSD_{n,\ell,t}$ are both easier than $\LSD$. In addition, the existence of truly subquadratic time algorithm for $\LSD_{n,\ell,t}$ for all $t$ again implies a truly subquadratic time algorithm for $\LSD_{n,\ell}$ using binary search. Bichromatic versions of these problems can be defined analogously and the proof of Lemma \ref{lem: MinIP reduces to bichromatic MinIP} again tells us that bichromatic versions are harder.

\section{Proof of \Cref{thm: approximate MSD is hard}}

\label{sec: normalized MaxIP is hard}

In this section we prove \Cref{thm: approximate MSD is hard}, and our ideas are similar to the ideas in section 5 and 6 of \cite{KM20}. Recall the theorem as follows.
\begin{theorem}[\Cref{thm: approximate MSD is hard}]
Assuming $\SETH$ or $\OVC$ and $\gamma \leq (1+\frac{1}{\log\log n})^{\frac{\log n}{(\log\log n)^2}} = 2^{(\log n)^{1-o(1)}}$, for every $\varepsilon>0$ there exists a constant $c>0$ such that there is no algorithm for $\gamma\text{-}\MSD_{n,\ell}$ that runs in time $O(n^{2-\varepsilon})$ where $\ell = (\log n)^{\frac{c\log n}{(\log\log n)^2}}$.
\end{theorem}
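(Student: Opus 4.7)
The plan is to follow the Karthik--Manurangsi template: reduce bichromatic $\OV$ to bichromatic $\gamma$-$\MaxIP$ through an iterated gap-amplifying gadget, and then transfer the resulting hardness from $\MaxIP$ to $\MSD$ by ensuring all vectors have identical $\ell_2$ norm. Since $\MSD$ is exactly $\MaxIP$ after passing to vectors of uniform norm, it suffices to prove a gap version of bichromatic $\MaxIP$ on binary vectors of uniform weight.

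First I would construct a base gap gadget $\phi : \{0,1\}^d \to \{0,1\}^D$ (a suitable combinatorial object such as a carefully chosen set system or bipartite graph) with two properties: every $\phi(v)$ has the same number of ones, and for any pair $a, b \in \{0,1\}^d$, the value $\langle \phi(a), \phi(b)\rangle$ equals some target $M$ whenever $\langle a, b\rangle = 0$, but drops by a multiplicative factor of at least $1 + \delta$ whenever $\langle a, b\rangle \geq 1$, for a small gap $\delta = \Theta(1/\log\log n)$. With $d = O(\log n)$ (the bichromatic $\OV$ dimension), such a gadget can be realized in dimension $D$ polynomial in $d$, i.e., polylogarithmic in $n$.

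Next I would iterate the gadget $s$ times via the tensor/AND composition $\Phi = \phi^{\otimes s} : \{0,1\}^d \to \{0,1\}^{D^s}$. The inner product factors through the composition, so the multiplicative gap amplifies to $(1+\delta)^s$ and the dimension grows to $D^s$. Choosing $s = \Theta(\log n / (\log\log n)^2)$ makes the gap equal to $(1 + 1/\log\log n)^{\log n/(\log\log n)^2} = \gamma$ and the dimension equal to $(\log n)^{\Theta(\log n / (\log\log n)^2)} = \ell$, which matches the parameters in the theorem. Combining with bichromatic $\OV$ hardness under $\SETH$ or $\OVC$ (\Cref{lem: bichromatic OV equivalent to OV}) gives hardness for bichromatic $\gamma$-$\MaxIP$ with these parameters. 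Since all $\Phi(v)$ have the same $\ell_2$ norm $\sqrt{M_0}$ by construction, cosine similarity between $\Phi(a)$ and $\Phi(b)$ equals $\langle \Phi(a), \Phi(b)\rangle/M_0$, so a $\gamma$-approximation algorithm for $\MSD$ on these vectors yields a $\gamma$-approximation for $\MaxIP$, finishing the reduction.

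The main obstacle is the base gadget: one needs a combinatorial construction whose per-step multiplicative gap $1 + \delta$ with $\delta = \Theta(1/\log\log n)$ is sharp enough that iterating $s = \log n / (\log\log n)^2$ times lands exactly on the target $\gamma$, while simultaneously keeping $D^s$ below $(\log n)^{c\log n /(\log\log n)^2}$. Balancing the near-constant per-step gap against the polynomial per-step dimension blow-up is the delicate parameter tradeoff that forces both the unusual form of $\ell$ and of $\gamma$ in the statement, and is the step for which the Karthik--Manurangsi graph-product machinery is essential.
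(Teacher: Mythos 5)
Your proposal follows essentially the same route as the paper: the paper's Lemma~\ref{lem: KM20 Theorem 6.1} is exactly your tensor-power gap amplification (with $q = \log n/(\log\log n)^2$, exploiting that cosine similarity of tensor powers is the $q$-th power of the original), the paper's Lemma~\ref{lem: additive MaxIP to additive MSD} is exactly your ``pad to uniform Hamming weight to equate cosine similarity with inner product'' step, and the ``base gap gadget'' you rightly flag as the hard part is precisely what the paper outsources to Karthik--Manurangsi's Theorem~6.2 (cited here as Theorem~\ref{thm: KM20 Theorem 6.2 real}) combined with Chen's Lemma~\ref{lem: Chen18 Lemma 3.7} to bottom out at $\OV$. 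The only organizational difference is that the paper routes through bichromatic Additive-$\MaxIP$ as an explicit intermediate problem rather than building a monolithic $\OV$-to-small-multiplicative-gap gadget, but this is a presentational choice, not a substantive one.
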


We break down the proof into several lemmas below.

\begin{lemma}
\label{lem: KM20 Theorem 6.1}
Suppose there exists an algorithm for $\gamma\text{-}\MSD_{n,\ell}$ where $\ell = (\log n)^{\frac{c\log n}{(\log\log n)^2}}$ for any constant $c>0$, $\gamma \leq (1+\frac{1}{\log\log n})^{\frac{\log n}{(\log\log n)^2}}$ that runs in $O(n^{2-\varepsilon})$ time for any $\varepsilon>0$, then there exists an algorithm for $(1+\frac{1}{\log\log n})\text{-}\MSD_{n,(\log n)^k}$ with running time $O(n^{2-\varepsilon})$ for any constant $k>0$.
\end{lemma}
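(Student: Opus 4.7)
\textbf{Proof proposal for \Cref{lem: KM20 Theorem 6.1}.} The plan is to use a standard gap amplification via tensoring, which converts a mild approximation factor in small dimension into a much larger approximation factor in moderately larger dimension. Concretely, given a $(1+\frac{1}{\log\log n})\text{-}\MSD_{n,(\log n)^k}$ instance with binary vectors $v_1,\ldots,v_n\in\{0,1\}^{(\log n)^k}$, I would pick an integer $t$ with $t \geq \frac{\log n}{(\log\log n)^2}$ and form the tensored vectors $u_i := v_i^{\otimes t} \in \{0,1\}^{(\log n)^{kt}}$.

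The reason this works is the identity $\frac{\langle u_i, u_j\rangle}{\|u_i\|\cdot\|u_j\|} = \left(\frac{\langle v_i,v_j\rangle}{\|v_i\|\cdot\|v_j\|}\right)^{t}$, which follows from $\langle v^{\otimes t},w^{\otimes t}\rangle = \langle v,w\rangle^t$ and $\|v^{\otimes t}\| = \|v\|^t$. Thus the maximum cosine similarity of the tensored instance equals $\mathrm{OPT}^t$ where $\mathrm{OPT}$ is the maximum cosine similarity of the original instance, and a pair $(i^*,j^*)$ that $\gamma$-approximates the tensored maximum satisfies
\[
\frac{\langle v_{i^*},v_{j^*}\rangle}{\|v_{i^*}\|\cdot\|v_{j^*}\|} \;\geq\; \frac{\mathrm{OPT}}{\gamma^{1/t}}.
\]
Since $\gamma \leq (1+\tfrac{1}{\log\log n})^{\log n/(\log\log n)^2}$, choosing $t := \lceil \log n/(\log\log n)^2\rceil$ makes $\gamma^{1/t}\leq 1+\tfrac{1}{\log\log n}$, exactly the approximation factor we want. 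Note also that tensor products of binary vectors are binary and that no zero vector is introduced (assuming none is present in the input), so the reduced instance is a legitimate $\gamma\text{-}\MSD$ input.

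Next I would check the dimension and time accounting. The tensored dimension is $(\log n)^{kt} \leq (\log n)^{(k+1)\log n/(\log\log n)^2}$, so I set $c := k+1$ in the hypothesis and apply the assumed algorithm $\mathcal A$ in dimension $(\log n)^{c\log n/(\log\log n)^2}$. Writing out each $u_i$ takes $O((\log n)^{kt}) = n^{o(1)}$ time per vector, so the reduction overhead is $n^{1+o(1)}$, and the total runtime is $n^{1+o(1)} + O(n^{2-\varepsilon}) = O(n^{2-\varepsilon})$, as required. Returning the pair $(i^*,j^*)$ produced by $\mathcal A$ then solves the original $(1+\tfrac{1}{\log\log n})\text{-}\MSD_{n,(\log n)^k}$ instance.

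The main thing to be careful about is the interaction between the chosen $t$, the approximation factor $\gamma$, and the dimension parameter $c$, since the lemma is quantified by a universal-then-existential pattern: we are handed $k$ and must exhibit a single $c$ so that some algorithm from the hypothesis family can be invoked. Setting $c$ proportional to $k$ (say $c = k+1$) resolves this cleanly. Beyond this bookkeeping, the tensoring identity and the binary-preservation property are elementary, so I do not expect a substantive technical obstacle; the lemma is essentially a one-step gap amplification plus routine parameter checking.
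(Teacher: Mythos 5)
Your proposal is correct and takes essentially the same approach as the paper: both tensor each $v_i$ to the power $q \approx \log n/(\log\log n)^2$, use the identity that cosine similarity of $q$-fold tensor powers is the $q$-th power of the original cosine similarity, and observe that a $\gamma$-approximation of the tensored instance with $\gamma \leq (1+\frac{1}{\log\log n})^q$ gives a $\gamma^{1/q} \leq 1+\frac{1}{\log\log n}$ approximation of the original. The only cosmetic difference is that you round $q$ up to an integer and absorb the slack into $c=k+1$, while the paper works with $q=\log n/(\log\log n)^2$ directly and $c=k$.
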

\begin{proof}
Let $v_1,\ldots,v_n \in \{0,1\}^{(\log n)^k}$ be an instance of $(1+\frac{1}{\log\log n})\text{-}\MSD_{n,(\log n)^k}$ for any constant $k>0$. Construct $V' = \{v_1^{\otimes q},\ldots,v_n^{\otimes q}\}$ for $q = \frac{\log n}{(\log\log n)^2}$: all the vectors in $V'$ have dimension $(\log n)^{kq} = (\log n)^{\frac{k\log n}{(\log\log n)^2}}$, so we can run the algorithm provided to find a $\gamma$-approximation of $\MSD$ on $V'$ for some $\gamma \leq (1+\frac{1}{\log\log n})^{q}$. Notice that we have 
\[
\frac{\langle v_i^{\otimes q},v_j^{\otimes q}\rangle}{\|v_i^{\otimes q}\|\cdot \|v_j^{\otimes q}\|} = \Big(\frac{\langle v_i,v_j\rangle}{\|v_i\|\cdot \|v_j\|}\Big)^q
\] for any $i,j$, so an $\gamma$-approximation of $\MSD$ on $V'$ is a $\gamma^{1/q} = (1+\frac{1}{\log\log n})$ approximation of $\MSD$ on $v_1,\ldots,v_n$. 
\end{proof}

Now we define the bichromatic $\gamma\text{-Additive-}\MSD$ problem as: given a $A,B \subseteq \{0,1\}^\ell, \alpha \in [0,1]$ with $|A| = |B| = n$, we want to distinguish between the following two cases:
\begin{enumerate}
    \item Yes instance: There exists $(a,b) \in A\times B$ such that $\frac{\langle a,b\rangle}{\|a\|\cdot \|b\|} \geq \alpha$.
    \item No instance: For every $(a,b) \in A\times B$ we have $\frac{\langle a,b\rangle}{\|a\|\cdot \|b\|}<\alpha-\gamma$.
\end{enumerate}

\begin{lemma}
\label{lem: KM20 Theorem 6.2}
Suppose there exists an algorithm $\mathcal{A}$ for $(1+\frac{1}{\log\log n})\text{-}\MSD_{n,(\log n)^k}$ for any constant $k>0$ with $O(n^{2-\varepsilon})$ running time, then there exists an algorithm for bichromatic $\frac{\log n}{\ell}\textup{-Additive-}\MSD_{n,c\log n}$ for any $c>0$ in time $O(n^{2-\varepsilon'})$ for some $\varepsilon'>0$.
\end{lemma}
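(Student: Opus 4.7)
The plan is to reduce a bichromatic $\frac{\log n}{\ell}$-Additive-$\MSD$ instance $(A, B, \alpha)$ at dimension $\ell = c\log n$ (so the additive gap is the constant $1/c$) to several monochromatic $(1+1/\log\log n)$-$\MSD$ instances at dimension $(\log n)^{O(1)}$, each of which can be fed to the assumed algorithm $\mathcal{A}$. The reduction has three conceptual components: a weight-partitioning preprocessing step that normalizes the problem, a padding/gadget construction that converts the bichromatic input into monochromatic vectors, and a tensor step that amplifies the constant additive gap $1/c$ into a multiplicative gap that $\mathcal{A}$ can detect.

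First I would partition $A = \bigsqcup_{w_A=0}^{\ell} A_{w_A}$ and $B = \bigsqcup_{w_B=0}^{\ell} B_{w_B}$ by Hamming weight, yielding $O(\log^2 n)$ sub-problems indexed by pairs $(w_A, w_B)$. On each $(A_{w_A}, B_{w_B})$, every $a \in A_{w_A}$ has $\|a\| = \sqrt{w_A}$ and every $b \in B_{w_B}$ has $\|b\| = \sqrt{w_B}$, so $\textup{cos}(a,b) = \langle a,b\rangle/\sqrt{w_A w_B}$. The additive-$\MSD$ decision thus becomes a bichromatic additive-$\MaxIP$ decision with threshold $\alpha\sqrt{w_A w_B}$ and additive gap $\sqrt{w_A w_B}/c$, removing the nuisance of variable norms.

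Next, following the graph-based construction of \cite{KM20} Theorem 6.2, I would combine each $(A_{w_A}, B_{w_B})$ into a single monochromatic set $V$ at polylog dimension via a carefully designed padding gadget. The gadget must ensure that (i) cross-set ($A$--$B$) pairs preserve the original inner product up to a known scaling, and (ii) intra-set ($A$--$A$ and $B$--$B$) pairs have a known, controlled cosine similarity that lies strictly below the cross-set threshold after tensoring. Concurrently, I would tensor the vectors $q = O(\log\log\log n)$ times, which raises cosine similarities to the $q$-th power and converts the constant additive gap into a multiplicative gap $(\alpha/(\alpha-1/c))^q$ exceeding $1 + 1/\log\log n$; thus an output of $\mathcal{A}$ in $[\textup{OPT}/(1+1/\log\log n),\textup{OPT}]$ is enough to distinguish the YES and NO cases by comparison with the amplified threshold.

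Finally, I would invoke $\mathcal{A}$ on each of the $O(\log^2 n)$ monochromatic sub-instances (each of dimension $(\log n)^{O(1)}$ and $O(n)$ vectors), and return YES iff some sub-instance's output exceeds the amplified threshold. The total running time is $O(\log^2 n)\cdot O(n^{2-\varepsilon}\cdot \textup{poly}(\log n)) = O(n^{2-\varepsilon'})$ for some $\varepsilon' > 0$. The main obstacle is the padding gadget in the second step: a naive tag such as $\tilde a = (a,\mathbf{1}_k,\mathbf{0}_k)$, $\tilde b = (b,\mathbf{0}_k,\mathbf{1}_k)$ inflates intra-set cosine similarity toward $1$ and hence lets $A$--$A$ or $B$--$B$ pairs dominate the monochromatic maximum, so one must use the more elaborate vector-specific tag construction from \cite{KM20} to simultaneously suppress intra-set similarities and preserve the cross-set gap that the tensor amplification step relies on.
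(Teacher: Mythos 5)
Your plan and the paper's proof both route through the same intermediate problem (bichromatic additive-$\MaxIP$) and ultimately rely on the same machinery from \cite{KM20}, but they differ substantially in what is proved versus what is cited, and your version has a genuine gap at the crux.

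The paper's proof of this lemma is a short chain of two citations: Theorem~\ref{thm: KM20 Theorem 6.2 real} (\cite{KM20}~Theorem~6.2, used as a black box) converts the assumed $(1+\tfrac{1}{\log\log n})$-$\MSD$ algorithm into one for bichromatic $(\log n)$-Additive-$\MaxIP_{n,c\log n}$, and Lemma~\ref{lem: additive MaxIP to additive MSD} (which the paper does prove, by padding every $a_i$ and $b_j$ to a common Hamming weight $\ell$ in dimension $3\ell$ so that cosine and inner product coincide) shows bichromatic additive-$\MaxIP$ and additive-$\MSD$ are subquadratic equivalent. Your weight-partitioning step plays the same role as Lemma~\ref{lem: additive MaxIP to additive MSD}: you split into $O(\log^2 n)$ weight classes instead of padding to a single common weight, which is a valid alternative and costs only a $\polylog(n)$ blowup in the number of calls. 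That part is fine.

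The gap is in your second step. You propose to \emph{re-derive} the content of \cite{KM20}~Theorem~6.2 — the bichromatic-to-monochromatic padding gadget plus tensoring — rather than cite it. You correctly observe that naive tags like $(a,\mathbf{1}_k,\mathbf{0}_k)$, $(b,\mathbf{0}_k,\mathbf{1}_k)$ fail because intra-set cosines blow up, and you then say one must use ``the more elaborate vector-specific tag construction from \cite{KM20}'' without giving it. But that gadget \emph{is} the proof; identifying that it is needed and deferring to \cite{KM20} is not the same as supplying it. If you intend to cite \cite{KM20}~Theorem~6.2 as a black box, you should do so at the level of the already-proved statement (multiplicative $\MSD$ $\Rightarrow$ bichromatic additive-$\MaxIP$) as the paper does, rather than at the level of an internal construction whose correctness you have not re-verified in your modified setting. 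As a smaller point, your tensoring exponent $q = O(\log\log\log n)$ is unmotivated: the raw multiplicative gap $\alpha/(\alpha - 1/c) \geq c/(c-1)$ is already a constant exceeding $1 + 1/\log\log n$ for large $n$, so $q=1$ would suffice before the gadget; whatever $q$ is actually needed depends on how much the gadget dilutes the cross-set cosine, which again cannot be pinned down without the gadget itself.
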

\begin{proof}
The proof follows from \Cref{thm: KM20 Theorem 6.2 real} and \Cref{lem: additive MaxIP to additive MSD}
\end{proof}

\begin{theorem}[\cite{KM20} Theorem 6.2]
\label{thm: KM20 Theorem 6.2 real}
Suppose there exists an algorithm for $(1+\frac{1}{\log\log n})\textup{-}\MSD_{n,(\log n)^k}$ for any constant $k>0$ that runs in $O(n^{2-\varepsilon})$ time for any $\varepsilon>0$, then there exists an algorithm for bichromatic $(\log n)\textup{-Additive-}\MaxIP_{n,c\log n}$ for any constant $c>0$ that runs in $O(n^{2-\varepsilon'})$ time for some $\varepsilon'>0$.
\end{theorem}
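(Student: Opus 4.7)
The plan is to reduce bichromatic additive $\MaxIP$ on vectors in $\{0,1\}^{c\log n}$ to the hypothesized unichromatic approximate $\MSD$ oracle in polylogarithmic dimension. The argument proceeds in three steps: Hamming-weight bucketing, conversion from the additive inner-product gap to a multiplicative cosine-similarity gap, and a unichromatic encoding of each bucket subproblem.

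First I would bucket both $A$ and $B$ by Hamming weight. Because the vectors live in $\{0,1\}^{c\log n}$, each set decomposes into at most $c\log n + 1$ non-empty classes, so there are $O(\log^2 n)$ bucket pairs $(A_{w_a}, B_{w_b})$ and the original instance is YES iff some bucket pair contains a cross pair with inner product $\geq\alpha$. Within a single bucket pair, every vector on the left (resp.\ right) has $\ell_2$-norm $\sqrt{w_a}$ (resp.\ $\sqrt{w_b}$), so cosine similarity equals $\langle a,b\rangle/\sqrt{w_a w_b}$ and the additive $\MaxIP$ gap $\log n$ translates into a multiplicative cosine-similarity gap $\alpha/(\alpha-\log n)\geq 1+1/c$. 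Since $1+1/c > 1+1/\log\log n$ for $n$ large enough, the oracle's resolution is just tight enough to distinguish YES from NO in cosine similarity, \emph{provided} we can pose a unichromatic query whose maximum cosine pair is cross.

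Next I would implement the bichromatic-to-unichromatic conversion within a bucket pair. After deduplicating each color class (which does not affect the cross-pair maximum), any two distinct vectors in the same color have inner product at most $w-1$, bounding intra cosine similarity away from $1$. Combining the augmentation $a\mapsto a\circ(1,0)$, $b\mapsto b\circ(0,1)$ from \Cref{lem: bichromatic OV equivalent to OV} with a constant-power tensor amplification (in the style of \Cref{lem: KM20 Theorem 6.1}), carried out while staying within the $(\log n)^k$ dimension budget the oracle tolerates, produces a unichromatic $\MSD$ instance in which the cross cosine similarity in the YES case provably exceeds the intra cosine similarity by more than the oracle's multiplicative slack $1+1/\log\log n$. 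The oracle's returned pair is then a cross pair whose cosine similarity certifies YES, while a returned similarity below threshold certifies NO. Summed over $O(\log^2 n)$ bucket pairs, the total running time is $O(n^{2-\varepsilon})\cdot\polylog(n)=O(n^{2-\varepsilon'})$ for a suitable $\varepsilon'>0$.

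The hard part will be the last step. After deduplication the intra cosine similarity within $A_{w_a}$ can still be as large as $(w_a-1)/w_a$, which approaches $1$ when $w_a$ is large, and the augmentation $a\mapsto a\circ(1,0)$ actually raises rather than lowers intra inner products. Making cross pairs provably dominate in the unichromatic instance requires the tensor amplification power and the MSD threshold to be carefully coordinated with the bucket weights $w_a, w_b$ and the $\MaxIP$ threshold $\alpha$; choreographing the various $(\log n)$ and $(\log\log n)$ factors so that they line up across Steps two and three is the core of the argument, and is precisely the setting where the graph-product construction in the original proof in \cite{KM20} is used.
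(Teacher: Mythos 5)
The high-level outline you propose (bucket by Hamming weight, turn the additive inner-product gap into a multiplicative cosine gap within each bucket, then monochromatize each bucket and call the $\MSD$ oracle) is the right shape of argument, and the bucketing and gap-conversion bookkeeping in Steps one and two is sound. However, the monochromatization in Step three---which you yourself flag as ``the hard part''---has a concrete failure mode that your proposed fix cannot repair, and it is not merely a matter of coordinating $\log n$ factors.

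Within a bucket $A_{w_a}$, two distinct deduplicated vectors of weight $w_a$ can agree on $w_a-1$ coordinates, giving intra-pair cosine similarity $(w_a-1)/w_a \to 1$. Meanwhile every cross pair has cosine similarity $\langle a,b\rangle/\sqrt{w_a w_b} \le \min(w_a,w_b)/\sqrt{w_a w_b}$, and a YES cross pair only guarantees cosine $\ge \alpha/\sqrt{w_a w_b}$, which for $\alpha < w_a-1$ is strictly below the intra bound. So if you put $A_{w_a}\cup B_{w_b}$ into one pool, the monochromatic $\MSD$ oracle may simply return an intra pair, which certifies nothing about the bichromatic instance. Concatenating the $(1,0)$ and $(0,1)$ tags from the reduction in \Cref{lem: bichromatic OV equivalent to OV} shifts inner products additively and only \emph{raises} intra inner products, as you note, and tensoring preserves the ordering of cosine similarities exactly (it sends $\cos\theta \mapsto (\cos\theta)^q$), so no constant tensor power $q$---and $q$ must be constant to stay inside the $(\log n)^k$ budget---can flip a situation in which intra pairs already dominate. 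The tool that makes cross pairs provably dominate in the monochromatic instance is not an amplification of a gap you already have; it is the graph-product / combinatorial-design gadget of \cite{KM20}, which re-encodes both color classes so that intra cosine similarity is \emph{forced} below the cross-pair threshold before any amplification happens. Your proposal correctly localizes where the real work lies but does not supply that missing construction, so as written Step three would fail and the argument does not constitute a proof of the cited theorem.

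A smaller remark: the paper itself does not reprove this statement; it imports it verbatim as Theorem 6.2 of \cite{KM20}. The padding trick the paper does carry out (Lemma \ref{lem: additive MaxIP to additive MSD}) equalizes weights by appending ones and zeros and stays entirely bichromatic, precisely because the bichromatic-to-monochromatic step is the delicate part being outsourced to \cite{KM20}.
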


\begin{lemma}
\label{lem: additive MaxIP to additive MSD}
Bichromatic $\frac{\log n}{\ell}\textup{-Additive-}\MSD_{n,\ell}$ with $\ell = O(\log n)$ and bichromatic $(\log n)\textup{-Additive-}\MaxIP_{n,\ell'}$ with $\ell' = O(\log n)$ are subquadratic equivalent.
\end{lemma}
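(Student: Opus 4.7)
My plan is to prove the equivalence by giving direct reductions in both directions, each of which handles the $\ell_2$-norm normalization appropriately. The key observation is that the gap parameters $\frac{\log n}{\ell}$ and $\log n$ are linked by exactly a factor of $\ell$, which is the common squared norm one can force by padding.

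For the forward direction, I would reduce $\frac{\log n}{\ell}$-Additive-$\MSD_{n,\ell}$ to $(\log n)$-Additive-$\MaxIP_{n,\ell'}$ by padding every vector to have the same $\ell_2$-norm. For each $a \in A$ of Hamming weight $w(a)$, define $a' = (a,\; \mathbf{1}_{\ell-w(a)} \circ \mathbf{0}_{w(a)},\; \mathbf{0}_\ell) \in \{0,1\}^{3\ell}$, and for each $b \in B$ of weight $w(b)$, define $b' = (b,\; \mathbf{0}_\ell,\; \mathbf{1}_{\ell-w(b)} \circ \mathbf{0}_{w(b)}) \in \{0,1\}^{3\ell}$. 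Direct inspection yields $\|a'\|_2^2 = \|b'\|_2^2 = \ell$ and $\langle a', b'\rangle = \langle a, b\rangle$, since the padded blocks in $a'$ and $b'$ always meet disjoint all-zero blocks on the other side. The cosine similarity of $(a', b')$ is therefore $\frac{\langle a, b\rangle}{\ell}$, so the MSD threshold $\alpha$ becomes the MaxIP threshold $\alpha\ell$ and the MSD gap $\frac{\log n}{\ell}$ rescales to the MaxIP gap $\log n$. The new dimension $3\ell$ remains $O(\log n)$.

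For the reverse direction, I would reduce $(\log n)$-Additive-$\MaxIP_{n,\ell'}$ to $\frac{\log n}{\ell}$-Additive-$\MSD_{n,\ell}$ with $\ell = \ell'$ by partitioning $A = A_1 \cup \cdots \cup A_{\ell'}$ and $B = B_1 \cup \cdots \cup B_{\ell'}$ by Hamming weight, so that every $(a, b) \in A_i \times B_j$ has cosine denominator exactly $\sqrt{ij}$. For each of the $(\ell')^2$ subproblems I would invoke the MSD oracle on $(A_i, B_j)$ with cosine threshold $\frac{\alpha}{\sqrt{ij}}$. In the MaxIP yes case, a witness lies in some $A_i \times B_j$ and has cosine similarity $\geq \frac{\alpha}{\sqrt{ij}}$; in the MaxIP no case, every such pair satisfies cosine similarity $< \frac{\alpha - \log n}{\sqrt{ij}} \leq \frac{\alpha}{\sqrt{ij}} - \frac{\log n}{\ell'}$, matching the MSD gap exactly when $\ell = \ell'$. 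Summing $O(n^{2-\varepsilon})$ over $(\ell')^2 = O(\log^2 n)$ subproblems gives $O(n^{2-\varepsilon'})$ for some $\varepsilon' > 0$. The only mild subtlety is that the worst subproblem $(i,j) = (\ell', \ell')$ forces the tight choice $\ell = \ell'$; I do not expect any substantive obstacle beyond verifying these arithmetic matches and the elementary padding identities.
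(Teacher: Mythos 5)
Your reverse direction (reducing bichromatic $(\log n)$-Additive-$\MaxIP$ to bichromatic $\frac{\log n}{\ell}$-Additive-$\MSD$) is genuinely different from the paper's. The paper handles this direction by the \emph{same} padding trick you use in the other direction: attach $\ell-\|a_i\|_1$ ones (then zeros) to each $a_i$ and, in a disjoint block, $\ell-\|b_j\|_1$ ones (then zeros) to each $b_j$, so that all vectors have norm $\sqrt{\ell}$ and inner products are preserved; then a single call to the MSD oracle with threshold $\alpha/\ell$ suffices. Your Hamming-weight bucketing also works — in the yes case, the witness lies in a bucket with $\sqrt{ij}\geq\alpha$, so the threshold $\alpha/\sqrt{ij}$ is valid and the oracle fires; in the no case the gap $\frac{\log n}{\sqrt{ij}}\geq\frac{\log n}{\ell'}$ matches — but it costs $(\ell')^2$ oracle calls and forces you to notice that buckets with $\sqrt{ij}<\alpha$ must be skipped (the threshold exceeds $1$ there; harmless since $\langle a,b\rangle\leq\sqrt{ij}<\alpha$). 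The paper's single padding call is cleaner for this direction.

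Your forward direction (MSD $\to$ MaxIP via padding) has a real gap, and it happens to be the same one the paper glosses over in its corresponding paragraph. Padding preserves $\langle a,b\rangle$ and makes all norms equal to $\sqrt{\ell}$, so the cosine of the \emph{padded} pair is $\langle a,b\rangle/\ell$ — but the $\MSD$ instance you are trying to solve asks about the cosine of the \emph{original} pair, $\langle a,b\rangle/(\|a\|\|b\|)$, and these disagree whenever $\|a\|\|b\|<\ell$. Concretely, an MSD yes-witness has $\langle a,b\rangle\geq\alpha\|a\|\|b\|$, which can be much smaller than $\alpha\ell$, so the MaxIP oracle at threshold $\alpha\ell$ may report no. Your bucketing idea does not rescue this direction either: restricted to $A_i\times B_j$ the additive cosine gap $\frac{\log n}{\ell}$ becomes an inner-product gap of $\frac{\log n}{\ell}\sqrt{ij}$, which can be far below the $\log n$ that the MaxIP oracle needs. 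So the MSD-to-MaxIP reduction needs a different idea than either write-up provides; only the MaxIP-to-MSD direction (your correct one) is what the downstream hardness chain actually uses.
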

\begin{proof}
Given a bichromatic $(\log n)\textup{-Additive-}\MaxIP_{n,\ell}$ instance with sets $A = \{a_1,\ldots,a_n\},B = \{b_1,\ldots,b_n\}$ and integer $\alpha$, we construct $A' = \{a_1',\ldots,a_n'\},B' = \{b_1',\ldots,b_n'\}$ as follows: for each $a_i$ we first attach $\ell-\|a_i\|_1$ many ones at the end and another $\ell+\|a_i\|_1$ zeros to obtain $a_i' \in \{0,1\}^{3\ell}$, and for each $b_j$ we first attach $\ell+\|b_j\|_1$ zeros at the end and another $\ell-\|b_j\|_1$ ones to obtain $b_j' \in \{0,1\}^{3\ell}$. Now all $a_i',b_j'$ have $\ell$ many ones, and $\langle a_i',b_j'\rangle = \langle a_i,b_j\rangle$ for all $i,j$ by our construction. Therefore, running our algorithm for $\frac{\log n}{\ell}\textup{-Additive-}\MSD_{n,\ell}$ on $A',B'$ and $\frac{\alpha}{\ell}$ will tell us whether: 
\begin{enumerate}
    \item there exists $(a',b') \in A'\times B'$ such that $\frac{\langle a',b'\rangle}{\|a'\|\cdot \|b'\|} \geq \frac{\alpha}{\ell}$, which is equivalent to $\langle a,b\rangle = \langle a',b'\rangle \geq \alpha$, or
    \item for every $(a',b') \in A'\times B'$ we have $\frac{\langle a',b'\rangle}{\|a'\|\cdot \|b'\|} < \frac{\alpha}{\ell}-\frac{\log n}{\ell}$, which is equivalent to $\langle a,b\rangle = \langle a',b'\rangle <\alpha-\log n$ for all $a \in A,b \in B$.
\end{enumerate} The running time of our algorithm is $O(n^{2-\varepsilon}\cdot \poly(3\ell)) = O(n^{2-\varepsilon}\cdot \poly(\ell))$.

The reduction for the other direction is similar. Suppose we are given a bichromatic $\frac{\log n}{\ell}\textup{-Additive-}\MSD_{n,\ell}$ instance with sets $A = \{a_1,\ldots,a_n\}, B = \{b_1,\ldots,b_n\}$ and $\alpha \in [0,1]$, we construct the exact same $A',B'$ as before such that $\langle a_i',b_j'\rangle = \langle a_i,b_j\rangle$ for all $i,j$. Since $\|a_i'\| = \|b_j'\| = \sqrt{\ell}$ for all $i,j$ now, the same argument implies that running the algorithm on $A',B'$ and $\alpha\cdot \ell$ will solve the problem.
\end{proof}

\begin{lemma}[\cite{Chen18}]
\label{lem: Chen18 Lemma 3.7}
Suppose there exists an algorithm for $(\log n)\textup{-Additive-}\BMaxIP_{n,c\log n}$ for any constant $c>0$ running in time $O(n^{2-\varepsilon'})$ for some $\varepsilon'>0$, then there exists an algorithm for $\OV_{n,c'\log n}$ for any constant $c'>0$ with running time $O(n^{2-\frac{\varepsilon'}{2}})$, thus refuting $\SETH$ and $\OVC$.
\end{lemma}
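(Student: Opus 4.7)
The plan is to combine a Chen18-style reduction from bichromatic $\OV$ to Additive-$\BMaxIP$ with an instance-splitting argument that accounts for the factor-two loss in the $\varepsilon'$ exponent stated in the lemma.

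First, given an $\OV_{n, c'\log n}$ instance, I would apply the bichromatic-OV equivalence (Lemma \ref{lem: bichromatic OV equivalent to OV}) and then partition each side into $\sqrt n$ blocks of size $m := \sqrt n$. Considering all $(\sqrt n)^2 = n$ ordered pairs of blocks decomposes the original problem into $n$ bichromatic $\OV$ sub-instances of size $m$ in dimension $c'\log n = 2 c'\log m$, and an orthogonal pair exists in the original input iff one appears in some sub-instance. The point of this split is that $m^{2-\varepsilon'} = n^{1-\varepsilon'/2}$, so if each size-$m$ sub-instance can be solved in $O(m^{2-\varepsilon'})$ time, then summing over the $n$ sub-instances yields the target bound $O(n^{2-\varepsilon'/2})$.

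Second, within each bichromatic sub-instance $(A,B)$ I would reduce to Additive-$\BMaxIP$ by bucketing by Hamming weight and then applying the complement trick. Bucket $A$ by $\|a\|_1$ and $B$ by $\|b\|_1$ to get $O(\log^2 m)$ bucket pairs $(A_i, B_j)$; within each such pair, all vectors have fixed norms $i$ and $j$, and the identity $\langle \bar a, b\rangle = j - \langle a, b\rangle$ shows that an orthogonal pair exists in $(A_i, B_j)$ iff $\max \langle \bar a, b\rangle$ equals $j$, while otherwise every such inner product is at most $j-1$. This is a MaxIP decision with additive gap only $1$. Following Chen18, I would then amplify the gap from $1$ up to $\log m$ via a combinatorial/algebraic encoding that simultaneously keeps the dimension at $c\log m$ for a suitable constant $c$ depending on $c'$, handling the finitely many small-norm buckets (e.g.\ $j = O(1)$) separately by direct $O(m \cdot \log n)$ computation.

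Third, applying the hypothesized $(\log m)$-Additive-$\BMaxIP_{m, c\log m}$ algorithm to each amplified bucket pair costs $O(m^{2-\varepsilon'})$; summing over $O(\log^2 m)$ bucket pairs per sub-instance and over $n$ sub-instances gives total time
\[
O\bigl(n \cdot \log^2 n \cdot (\sqrt n)^{2-\varepsilon'}\bigr) \;=\; O\bigl(n^{2 - \varepsilon'/2}\bigr),
\]
after absorbing the polylogarithmic factor, which solves $\OV_{n, c'\log n}$ in truly subquadratic time and therefore refutes $\OVC$ and $\SETH$.

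The main technical obstacle is the gap amplification step inside the reduction from a bucketed bichromatic $\OV$ instance to Additive-$\BMaxIP$. Naively taking $\log m$ parallel copies of each vector does yield an additive gap of $\log m$, but inflates the dimension to $\Theta((\log m)^2)$, outside the $c\log m$ budget allowed by the hypothesis; likewise, $\tau$-fold tensoring gives ample gap for constant $\tau$ but only on buckets with $j \geq 2$ and blows up the dimension to $d^\tau = \Theta((\log n)^\tau)$. The Chen18 construction finesses these two failure modes at once with a more delicate encoding, and correctly handling the small-$j$ buckets (where the amplification degenerates) is needed to make the overall reduction go through.
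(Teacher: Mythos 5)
The paper does not prove this lemma; it is cited verbatim from \cite{Chen18}, so there is no internal argument of the paper to compare against --- you are reconstructing an external result. Your top-level scaffolding is plausible: a $\sqrt{n}$-block decomposition to account for the $\varepsilon'/2$ factor, then Hamming-weight bucketing plus the complement trick to turn a bichromatic $\OV$ block into an exact-threshold $\MaxIP$ question with additive gap~$1$ (this part echoes the paper's own Lemma~\ref{lem: OV reduces to bichromatic MaxIP}). But the entire technical content of Chen's lemma is precisely the step you defer: amplifying the additive gap from~$1$ up to $\log m$ while keeping the dimension at $c\log m$ for a constant~$c$. You write ``Following Chen18, I would then amplify the gap\ldots'' and then, in your final paragraph, acknowledge that you do not have the construction. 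You correctly diagnose why the obvious attempts fail --- parallel repetition pushes the dimension to $\Theta((\log m)^2)$, and tensoring only helps when $j\ge 2$ while also blowing up dimension --- but diagnosing the failure modes is not the same as exhibiting an encoding that avoids them. As written, the proposal is a wrapper around a black box, and that black box is the lemma.

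A secondary concern is that the $\sqrt{n}$ split is not actually motivated by anything in your own reduction. If the bucketing-plus-amplification pipeline you describe preserved the number of vectors and kept dimension $O(\log n)$, then running the hypothesized $\BMaxIP$ algorithm once would already yield $O(n^{2-\varepsilon'})$; your sketch gives no reason to accept the weaker $O(n^{2-\varepsilon'/2})$. You introduced the split to match the loss stated in the lemma, not because your construction forces it --- so either your pipeline is stronger than Chen's (which should be flagged, not silently claimed) or the missing encoding carries overheads that you would need to identify, and it is those overheads that necessitate the split. Finally, a minor bookkeeping point: summing over $n$ sub-instances and $O(\log^2 m)$ bucket pairs yields $O(n^{2-\varepsilon'/2}\log^2 n)$, not $O(n^{2-\varepsilon'/2})$; this still refutes $\OVC$ with a slightly smaller exponent, but it is not the literal bound the lemma asserts.
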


\begin{proof}[Proof of \Cref{thm: approximate MSD is hard}]
This follows from a combination of Lemma ~\ref{lem: KM20 Theorem 6.1}, Lemma ~\ref{lem: KM20 Theorem 6.2}, Lemma ~\ref{lem: additive MaxIP to additive MSD} and Lemma ~\ref{lem: Chen18 Lemma 3.7}.
\end{proof} 

\section{Transformers can solve $\MaxIP,\MinIP,\MSD_{n,\ell,t}$ and $\LSD_{n,\ell,t}$}
\label{sec: representational stength proofs}

\begin{theorem}
\label{thm: transformers can solve MaxIP and MinIP}
A attention unit with input and output MLPs with parameters $d = \ell, d_{\inn} = \ell+1, d_{\out} = 1, m \geq \ell+1$ can solve $\MaxIP_{n,\ell,t}$ and $\MinIP_{n,\ell,t}$ for $1 \leq t \leq \ell$.
\end{theorem}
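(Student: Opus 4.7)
The plan is to adapt the exponential-amplification construction from the proof of \Cref{thm: transformer solves OV}. The extra $(\ell{+}1)$st coordinate produced by $\varphi_1$ serves as a bias channel that encodes the threshold $t$ into the softmax scores, and a dummy ``anchor'' row is appended so that the softmax denominator is always bounded below by $1$. Concretely, set $\varphi_1(v_i) = v_i' = (v_i, 1) \in \{0,1\}^{\ell+1}$, adjoin a dummy row $v_{n+1}' = 0 \in \mathbb{R}^{\ell+1}$, and pick $V = e_{\ell+1}$ so that $(XV)_j = 1$ for $j \leq n$ and $(XV)_{n+1} = 0$.

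For $\MinIP_{n,\ell,t}$, take $QK^\top = 3\log n \cdot \mathrm{diag}(-I_\ell,\, t + 1/2)$, which gives $(XQK^\top X^\top)_{ij} = 3\log n\cdot(t + 1/2 - \langle v_i, v_j\rangle)$ for $i,j \leq n$ and $0$ against the dummy. An argument identical to the one in the OV proof yields
\[
(A)_i = \frac{\sum_{j \leq n} n^{3(t+1/2-\langle v_i, v_j\rangle)}}{\sum_{j \leq n} n^{3(t+1/2-\langle v_i, v_j\rangle)} + 1},
\]
which is $\geq 1 - n^{-3/2}$ whenever some $\langle v_i, v_j\rangle \leq t$, and $\leq n^{-1/2}$ when every such inner product is $\geq t+1$. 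The one potential source of a false positive is the self-pair $\langle v_i, v_i\rangle = \|v_i\|_1$, but $\|v_i\|_1 \leq t$ is actually a genuine YES: for $n \geq 2$, any $j \neq i$ satisfies $\langle v_i, v_j\rangle \leq \|v_i\|_1 \leq t$. The second MLP $\varphi_2$ then thresholds at $1/2$ by the same \Cref{lem: second MLP} used in the OV proof.

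For $\MaxIP_{n,\ell,t}$, the symmetric choice $QK^\top = 3\log n \cdot \mathrm{diag}(I_\ell,\, -(t - 1/2))$ produces $(XQK^\top X^\top)_{ij} = 3\log n\cdot(\langle v_i, v_j\rangle - t + 1/2)$, and the off-diagonal analysis is analogous. The main obstacle here is that the self-pair now cuts in the wrong direction: whenever $\|v_i\|_1 \geq t$, the diagonal score is $\geq 1.5\log n$, so $N_i \geq n^{3/2}$ even on a NO instance, and unlike the $\MinIP$ case a single heavy vector does not force a heavy off-diagonal pair.

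To handle this, my plan is to replace the constant $1$ bias in $\varphi_1$ by the normalizing value $\sqrt{\ell - \|v_i\|_1}$, so that $\|v_i'\|_2^2 = \ell$ for every $i$ and the diagonal score $(XQK^\top X^\top)_{ii}$ becomes a single fixed constant across rows. The threshold in the scoring matrix is then recalibrated against this common constant so that the diagonal contribution can be absorbed into the denominator, and only off-diagonal pairs reaching $\langle v_i, v_j\rangle \geq t$ push $(A)_i$ past the $1/2$ threshold. The delicate step I expect to spend the most effort on is verifying that the nonlinear shift $\langle v_i', v_j'\rangle - \langle v_i, v_j\rangle = \sqrt{(\ell-\|v_i\|_1)(\ell-\|v_j\|_1)}$ does not destroy the YES/NO polynomial gap; if this shift interferes too strongly, one can have $\varphi_1$ additionally bucket vectors by $\|v_i\|_1$ and zero out (via the dummy construction) any vector whose norm is incompatible with the current threshold, reducing the problem to the comparison within a single norm class where the extra square-root term is a known constant that $\varphi_2$ can subtract off.
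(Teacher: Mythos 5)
Your MinIP construction is in the same spirit as the paper's: encode the threshold in a bias coordinate, use $\pm 3\log n$ scaling so that softmax amplifies the YES/NO gap polynomially, and append a dummy row to anchor the softmax denominator. Putting $t+\tfrac12$ on the diagonal of $QK^\top$ and using $V=e_{\ell+1}$ (rather than the paper's choice of putting $t+1$ in the dummy row's last coordinate) is a valid alternative, and your observation that $\|v_i\|_1\leq t$ is itself a genuine YES cleanly disposes of the self-pair for MinIP. One defect: because your dummy row is the all-zeros vector, row $n+1$ of $XQK^\top X^\top$ is all zeros, so its softmax is uniform and its attention output is $\frac{n}{n+1}$, which trips the $\tfrac12$ threshold on \emph{every} input. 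The paper avoids this by giving the dummy a large self-affinity so its softmax concentrates on itself where the value coordinate is $0$; your construction needs an analogous fix, or a $\varphi_2$ that provably ignores the last row.

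Your central worry about the MaxIP self-pair is legitimate — softmax cannot mask the diagonal, and any $\|v_i\|_1\geq t$ makes $\langle x_i,x_i\rangle\geq t+1$, which dominates both numerator and denominator of $(A)_i$ and produces a large output even on a NO instance (the paper's own bound for the NO case in \Cref{thm: transformers can solve MaxIP and MinIP} applies $n^{3\langle x_{i^*},x_j\rangle}\le n^{3t}$ to all $j$ including $j=i^*$, which silently skips this). But the normalization you propose does not repair it. Setting $x_i'=(v_i,\sqrt{\ell-\|v_i\|_1})$ does make $\langle x_i',x_i'\rangle=\ell$ constant, but the off-diagonal shift $\sqrt{(\ell-\|v_i\|_1)(\ell-\|v_j\|_1)}$ is \emph{not} a constant across pairs: take $v_i=e_1$ and $v_j=e_2$ (Hamming weight $1$, inner product $0$) and you get $\langle x_i',x_j'\rangle=\ell-1$, whereas a genuine YES pair $v_a,v_b$ of weight $\ell$ with $\langle v_a,v_b\rangle=t$ gives only $\langle x_a',x_b'\rangle=t\ll\ell-1$. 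The normalized scores order low-weight NO pairs above high-weight YES pairs, so no fixed threshold in the $QK^\top$ bias separates the two cases. The fallback of bucketing by $\|v_i\|_1$ and zeroing out incompatible vectors cannot be executed inside a single attention unit: one would have to sweep over $\ell+1$ weight classes, which requires multiple heads or layers, contradicting the theorem's single-unit hypothesis. This is the genuine gap in the proposal — the diagonal issue you identified is real, but neither remedy you sketch closes it.
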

\begin{proof}
Given a $\MaxIP_{n,\ell}$ instance $v_1,\ldots,v_n \in \{0,1\}^\ell$ and $V \in \mathbb{R}^{n\times \ell}$ such that $V_{i,:} = v_i$ for all $i$, let $x_i = (v_i,1) \in \{0,1\}^{\ell+1}$ for all $i$, $x_{n+1}:= (0,\ldots,0,t+1) \in \mathbb{R}^{\ell+1}$ and $X \in \mathbb{R}^{(n+1) \times (\ell+1)}$ be such that $X_{i,:} = x_i$ for all $i$. Let $Q,K \in \mathbb{R}^{d_{\inn}\times m}$ be such that $QK^{\top} = 3\log n\cdot I_{d_{\inn}}$, $V = (1,1,\ldots,1,0) \in \mathbb{R}^{\ell+1}$ and $A_{Q,K,V}$ denote this attention head. 

We want to send $X$ to the attention head, which could be done by many ways given the input $V$. For example, we can add a ``end token" \footnote{\cite{SHT24B} has a similar assumption.} to the $n$ documents that is always embedded into the vector $(0,\ldots,0,t+1)\in \mathbb{R}^{\ell}$. Then we can use a MLP to send $V$ to $X$ (see Lemma \ref{lem: first MLP} for a formal proof). Now we can check that the $i$-th entry of $A_{Q,K,V}(X)$ is 
\[
\sum_{j=1}^{n}\frac{\exp(3\log n\cdot \langle x_i,x_j\rangle)}{\sum_{k=1}^{n}\exp(3\log n\cdot \langle x_i,x_k\rangle)+\exp(3(t+1)\log n)}\cdot \|v_j\|_1 = \frac{\sum_{j=1}^{n}n^{3\langle x_i,x_j\rangle}\cdot \|v_j\|_1}{\sum_{k=1}^{n}n^{3\langle x_i,x_k\rangle}+n^{3(t+1)}}.
\] Let $1 \leq i^{*} \neq j^{*} \leq n$ be such that $\langle v_{i^{*}},v_{j^{*}}\rangle$ is the largest. Therefore, if $\langle v_{i^{*}},v_{j^{*}}\rangle \geq t$, then $\langle x_{i^{*}},x_{j^{*}}\rangle \geq t+1$, which means that 
\[
\frac{\sum_{j=1}^{n}n^{3\langle x_{i^{*}},x_j\rangle}\cdot \|x_j\|_1}{\sum_{k=1}^{n}n^{3\langle x_{i^{*}},x_k\rangle}+n^{3(t+1)}} \geq \frac{n^{3\langle x_{i^{*}},x_{j^{*}}\rangle}\cdot \|v_{j^{*}}\|_1}{\sum_{k=1}^{n}n^{3\langle x_{i^{*}},x_{k}\rangle}+n^{3(t+1)}} \geq \frac{\|v_{j^{*}}\|_1}{n+1} \geq \frac{1}{n+1}.
\] On the other hand, if $\langle v_{i^{*}},v_{j^{*}}\rangle \leq t-1$, then $\langle x_{i^{*}},x_{j^{*}}\rangle \leq t$, which means 
\[
\frac{\sum_{j=1}^{n}n^{3\langle x_{i^{*}},x_j\rangle}\cdot \|v_j\|_1}{\sum_{k=1}^{n}n^{3\langle x_{i^{*}},x_k\rangle}+n^{3(t+1)}} \leq \ell\cdot \frac{n\cdot n^{3t}}{n^{3(t+1)}} = \frac{\ell}{n^2} < \frac{1}{(n+1)^{1.5}}.
\] Therefore, we can use the second MLP $\varphi_2$ to map $A_{Q,K,V}(X)$ to $1$ if any of its entry is at least $\frac{1}{n+1}$ and $0$ if all its entries are at most $\frac{1}{(n+1)^{1.5}}$ (see Lemma \ref{lem: second MLP} for a formal proof of existence).

The proof for $\MinIP$ is almost the same, except that we new let $QK^{\top} = -3\log n\cdot I_{d_{\inn}}$ and $x_{n+1} = [0,\ldots,0,-(t+1)] \in \mathbb{R}^{\ell+1}$ instead. Now the $i$-th entry of $A_{Q,K,V}(X)$ is 
\[
\frac{\sum_{j=1}^{n}n^{-3\langle x_i,x_j\rangle}\cdot \|v_j\|_1}{\sum_{k=1}^{n}n^{-3\langle x_i,x_k\rangle}+n^{-3(t+1)}}.
\] Let $i^{*},j^{*}$ be such that $\langle x_{i^{*}},x_{j^{*}} \rangle$ is the smallest. Therefore, if $\langle v_{i^{*}},v_{j^{*}} \rangle \leq t$, then $\langle x_{i^{*}},x_{j^{*}}\rangle \leq t+1$, which means that 
\[
\frac{\sum_{j=1}^{n}n^{-3\langle x_{i^{*}},x_j\rangle}\cdot \|v_j\|_1}{\sum_{k=1}^{n}n^{-3\langle x_{i^{*}},x_k\rangle}+n^{-3(t+1)}} \geq \frac{n^{-3\langle x_{i^{*}},x_{j^{*}}\rangle}}{\sum_{k=1}^{n}n^{-3\langle x_{i^{*}},x_k\rangle}+n^{-3(t+1)}} \geq \frac{1}{n+1}.
\] On the other hand, if $\langle v_{i^{*}},v_{j^{*}}\rangle \geq t+1$, then $\langle x_{i^{*}},x_{j^{*}}\rangle \geq t+2$, which means 
\[
\frac{\sum_{j=1}^{n}n^{-3\langle x_{i^{*}},x_j\rangle}\cdot \|v_j\|_1}{\sum_{k=1}^{n}n^{-3\langle x_{i^{*}},x_k\rangle}+n^{-3(t+1)}} \leq \ell\cdot \frac{n\cdot n^{-3(t+2)}}{n^{-3(t+1)}} = \frac{\ell}{n^2} < \frac{1}{(n+1)^{1.5}}.
\]
\end{proof} 

\begin{theorem}
\label{thm: transformers can solve SD}
A attention unit with input and output MLPs with parameters $d = \ell, d_{\inn} = \ell+1, d_{\out} = 1, m \geq \ell+1$ can solve $\MSD_{n,\ell,t}$ and $\LSD_{n,\ell,t}$ for any $t \in [0,1]$.
\end{theorem}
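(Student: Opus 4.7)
The plan is to reduce $\MSD_{n,\ell,t}$ and $\LSD_{n,\ell,t}$ to $\MaxIP$ and $\MinIP$ by normalizing the input vectors via the first MLP, then applying the attention construction from \Cref{thm: transformers can solve MaxIP and MinIP}. First I would set $\varphi_1$ to map each document $v_i$ ($i \le n$) to $\tilde x_i := (\tilde v_i, 1) \in \mathbb{R}^{\ell+1}$ where $\tilde v_i := v_i / \sqrt{\|v_i\|_1}$ is the $\ell_2$-normalization of $v_i$ (well-defined since no input is the zero vector), and to map the distinguished end-token row to the sink vector $(0,\ldots,0,t+1)$. Since $\|v_i\|_1 \in \{1,\ldots,\ell\}$ takes only finitely many values on binary inputs, this normalization is piecewise linear. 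After $\varphi_1$ we have $\langle \tilde x_i, \tilde x_j\rangle = \cos(v_i,v_j)+1$ for $i,j \le n$ and $\langle \tilde x_i, \tilde x_{n+1}\rangle = t+1$, so cosine similarity appears as the attention score up to a constant shift.

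I would then use exactly the attention construction of \Cref{thm: transformers can solve MaxIP and MinIP}: set $QK^\top = C \cdot I_{\ell+1}$ for MSD (resp.\ $-C \cdot I_{\ell+1}$ for LSD) and $V = (1,\ldots,1,0) \in \mathbb{R}^{\ell+1}$, so document rows contribute value $\sqrt{\|v_j\|_1}$ and the sink contributes value $0$, and let $\varphi_2$ threshold the maximum entry of the attention output by a piecewise-linear function. The one substantive change from the previous proof is the choice of $C$: distinct cosine similarities $a/\sqrt{bc}$ with integer $a,b,c \in \{0,\ldots,\ell\}$ differ by at least $\Omega(1/\ell^3)$ (clearing square roots, the squared discrepancy $a^2 b'c' - (a')^2 bc$ is a nonzero integer, and dividing by $\sqrt{bcb'c'}\,(a\sqrt{b'c'}+a'\sqrt{bc}) = O(\ell^3)$ bounds the gap). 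Setting $C = \Theta(\ell^3 \log n)$ ensures $e^{C\delta} \ge n^3$ for this gap $\delta$, so the softmax calculation from \Cref{thm: transformers can solve MaxIP and MinIP} carries over verbatim with cosine similarity replacing integer inner product, giving an attention output entry $\ge 1/(n+1)$ in the valid-pair case and $\le O(\sqrt{\ell}/n^{1.5})$ in the no-valid-pair case.

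The hardest step I expect is the MSD direction. With $QK^\top = +C \cdot I_{\ell+1}$ the diagonal $\langle \tilde x_i, \tilde x_i\rangle = 2$ produces the maximum possible attention weight $e^{2C}$, which for $t<1$ strictly exceeds the off-diagonal valid-pair weight of at most $e^{C(t+1)}$. This is exactly the subtlety present in the $\MaxIP$ proof, and the resolution must be identical: the combination of $V = (1,\ldots,1,0)$ injecting $\sqrt{\|v_j\|_1}$ as the value of document row $j$, together with the sink row's $e^{C(t+1)}$ contribution to the denominator, is what separates the valid-pair case from the no-valid-pair case in the ratio defining each attention output entry, and the quantitative bounds in the ratio mirror those in \Cref{thm: transformers can solve MaxIP and MinIP}. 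For LSD this obstacle is absent, since $QK^\top = -C \cdot I_{\ell+1}$ makes the diagonal weight $e^{-2C}$ exponentially small and the $\MinIP$ analysis applies without modification.
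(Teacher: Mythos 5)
Your high-level approach matches the paper's: normalize the documents with $\varphi_1$, then route the normalized vectors through the attention construction of \Cref{thm: transformers can solve MaxIP and MinIP}. Two refinements in your write-up are in fact sharper than what the paper records. Your $\ell_2$-normalization $\tilde v_i = v_i/\sqrt{\|v_i\|_1}$ is the right one for cosine similarity; by contrast, the paper's \Cref{lem: first MLP for MSD and LSD} divides by $\|v_i\|_1$, so that $\langle \tilde v_i, \tilde v_j\rangle = \langle v_i,v_j\rangle/(\|v_i\|_1\|v_j\|_1)$, which is not the cosine of $v_i$ and $v_j$ --- that normalization needs the square root, exactly as you write. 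Your point that the scaling in $QK^\top$ must grow like $\poly(\ell)\log n$ to compensate for the $\Omega(1/\poly(\ell))$ gap between distinct cosine values (whereas integer inner products in $\MaxIP$ are separated by $1$) is also a necessary adjustment that the paper passes over by invoking \Cref{thm: transformers can solve MaxIP and MinIP} verbatim. (Your $O(\ell^3)$ bound for the denominator should be $O(\ell^4)$, but this only changes the constant in $C$.)

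The genuine gap is in the diagonal, which you flag as the hardest step and then wave away. You correctly observe that $\langle \tilde x_i, \tilde x_i\rangle = 2 > t+1$ when $t<1$, so the self-score dominates every other score in row $i$. Your proposed resolution --- that $V=(1,\ldots,1,0)$ together with the sink's $e^{C(t+1)}$ contribution to the denominator separates the two cases, ``exactly as in \Cref{thm: transformers can solve MaxIP and MinIP}'' --- does not work, because the proof of \Cref{thm: transformers can solve MaxIP and MinIP} has the same unaddressed issue. In the $\MaxIP$ no case, any row $i$ with $\|v_i\|_1 > t$ has self-score $\langle x_i,x_i\rangle = \|v_i\|_1 + 1 > t+1$, so the softmax mass at row $i$ concentrates on $j=i$, and the output entry is roughly $\|v_i\|_1 \geq 1$, not the claimed $\ell/n^2$; in your $\MSD$ construction it is roughly $\sqrt{\|v_i\|_1} \geq 1$, not $O(\sqrt{\ell}/n^{1.5})$. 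The sink term $e^{C(t+1)}$ cannot compete with the diagonal's $e^{2C}$ once $t<1$ and $C$ is large, and the value $\sqrt{\|v_j\|_1} > 0$ makes the diagonal contribute to the numerator, not cancel from it. So the valid-pair and no-valid-pair outputs are both bounded below by roughly $1$, and the thresholding MLP $\varphi_2$ cannot distinguish them. A correct proof must supply a concrete mechanism for removing or accounting for the self-attention mass (e.g., adding an output coordinate from which $\varphi_2$ can read off and subtract the diagonal-only baseline $\sqrt{\|v_i\|_1}$, or modifying $\varphi_1$ so the self-score is pushed below $t+1$); your proposal, like the argument it leans on, does not provide one.
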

\begin{proof}
When $t = 0$ the problem is trivial so we simply need to output $1$ using MLPs, so without losing of generality we assume $t \neq 0$. Given a $\MSD_{n,\ell}$ instance $v_1,\ldots,v_n \in \{0,1\}^\ell$ and $V \in \mathbb{R}^{n \times \ell}$ such that $V_{i,:} = v_i$ for all $i$, let $x_i = (\frac{v_i}{\|v_i\|_1},1) \in \mathbb{R}^{\ell+1}$ for all $i$, $x_{n+1} := (0,\ldots,0,t+1) \in \mathbb{R}^{\ell+1}$ and $X \in \mathbb{R}^{(n+1) \times (\ell+1)}$ be such that $X_{i,:} = x_i$ for all $i$. Let $Q,K \in \mathbb{R}^{d_{\inn} \times m}$ be such that $QK^{\top} = 3\log n\cdot I_{d_{\inn}}, V = (1,1,\ldots,1,0)\in \mathbb{R}^{\ell+1}$ and $A_{Q,K,V}$ denote this attention head.

Since $\MSD$ is exactly $\MaxIP$ after we normalize the document embeddings, the proof of \Cref{thm: transformers can solve MaxIP and MinIP} implies that it suffices to send $X$ to the attention head. See Lemma \ref{lem: first MLP for MSD and LSD} for a construction.

The proof for $\LSD$ is exactly the same as the proof for $\MinIP$ after applying the $\varphi_1$ construced in Lemma \ref{lem: first MLP for MSD and LSD}.
\end{proof}

\section{MLP Constructions}

\begin{lemma}
\label{lem: second MLP}
For any $a,b \in \mathbb{R}$ such that $b>a$, there exists a continuous function $f: \mathbb{R}^\ell \rightarrow \mathbb{R}$ such that 
\[
f(x) = 
\begin{cases}
    1 &\textup{if } x[i] \geq b \textup{ for any } 1 \leq i \leq \ell\\
    0 &\textup{if }  x[i] < a \ \forall 1 \leq i \leq \ell.
\end{cases}
\]
\end{lemma}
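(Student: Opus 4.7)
The plan is to construct $f$ explicitly as a coordinate-wise piecewise-linear ramp followed by a max. First I would define the one-dimensional ramp $g:\mathbb{R}\to[0,1]$ by
\[
g(t)=\max\!\Bigl(0,\ \min\!\bigl(1,\ \tfrac{t-a}{b-a}\bigr)\Bigr),
\]
which is well-defined since $b>a$. By construction $g$ is continuous (it is the composition/combination of continuous functions; the three pieces agree at the boundaries $t=a$ and $t=b$), equals $0$ on $(-\infty,a]$, increases linearly from $0$ to $1$ on $[a,b]$, and equals $1$ on $[b,\infty)$. Then I would set
\[
f(x)\ =\ \max_{1\le i\le \ell}\, g(x[i]).
\]
Continuity of $f$ follows because the max of finitely many continuous functions is continuous.

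Next I would verify the two boundary conditions. If some coordinate satisfies $x[i]\ge b$, then $g(x[i])=1$; since $g\le 1$ pointwise, the maximum is exactly $1$, so $f(x)=1$. If every coordinate satisfies $x[i]<a$, then every $g(x[i])=0$, so $f(x)=0$. In the intermediate regime (some $x[i]$ lie in $[a,b)$ but none reach $b$) the function interpolates continuously between $0$ and $1$, which is fine since the lemma imposes no constraint there.

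There is no substantive obstacle: the statement asks only for a continuous interpolation between two disjoint closed regions on which the function must take prescribed constant values, and the coordinate-wise ramp plus $\max$ does this directly. As a bonus, the resulting $f$ is piecewise linear, matching the paper's preference for simple MLPs; in particular $g$ can be written as $\mathrm{ReLU}(\tfrac{t-a}{b-a})-\mathrm{ReLU}(\tfrac{t-b}{b-a})$, and the $\ell$-ary $\max$ can be built from pairwise maxes via $\max(u,v)=\tfrac{1}{2}(u+v+|u-v|)$ with $|u-v|=\mathrm{ReLU}(u-v)+\mathrm{ReLU}(v-u)$, so $f$ is realizable by a small ReLU network and fits any of the MLP definitions discussed earlier.
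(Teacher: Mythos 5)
Your proof is correct and uses essentially the same approach as the paper: the one-dimensional ramp $g$ you define is identical to the paper's. The only difference is the aggregation step — you take $f(x)=\max_i g(x[i])$ while the paper uses $f(x)=1-\prod_{i=1}^{\ell}\bigl(1-g(x[i])\bigr)$ — but both are continuous ``soft OR'' combinations that satisfy the required boundary conditions, so the two proofs are interchangeable.
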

\begin{proof}
Firstly we define $g: \mathbb{R}$ such that 
\[
g(x) = 
\begin{cases}
    1 &\textup{if } x \geq b\\
    \frac{1}{b-a}(x-a) &\textup{if } a \leq x < b\\
    0 &\textup{if } x < a.
\end{cases}
\] Now we let 
\[
f(x) = 1-\prod_{i=1}^{\ell}(1-g(x[i])).
\] It is not hard to see that $g$ is a continuous function, and therefore $f$ is a continuous function. When $x[i] \geq b$ for any $1 \leq i \leq \ell$, $g(x[i]) = 1$ and therefore $f(x) = 1-0 = 1$. On the other hand, if $x[i]<a$ for all $i$, then $f(x) = 1-1 = 0$.
\end{proof}

\begin{lemma}
\label{lem: first MLP}
There exists a continuous function $f: \mathbb{R}^\ell \rightarrow \mathbb{R}^{\ell+1}$ such that 
\[
f(x) = 
\begin{cases}
    (x,1) &\textup{if } x[\ell] \leq 1\\
    (0,x) &\textup{otherwise}.
\end{cases}
\]
\end{lemma}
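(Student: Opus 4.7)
The plan is to realize $f$ as a continuous convex combination of the two desired outputs, using a piecewise-linear ``gating'' function on the last coordinate $x[\ell]$ to switch between them. First I would define a continuous scalar function $\sigma : \mathbb{R} \to [0,1]$ by
\[
\sigma(t) = \begin{cases} 1 & \text{if } t \leq 1 \\ 2-t & \text{if } 1 < t < 2 \\ 0 & \text{if } t \geq 2. \end{cases}
\]
Then I would set
\[
f(x) \;:=\; \sigma(x[\ell]) \cdot (x, 1) \;+\; \bigl(1-\sigma(x[\ell])\bigr)\cdot (0, x) \in \mathbb{R}^{\ell+1}.
\]
Since $\sigma$ is continuous and both $(x,1)$ and $(0,x)$ are continuous (affine) in $x$, their convex combination $f$ is continuous on all of $\mathbb{R}^\ell$.

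Next I would verify the two stated cases. When $x[\ell] \leq 1$, $\sigma(x[\ell]) = 1$ so $f(x) = (x,1)$, matching the first branch. When $x[\ell] \geq 2$, $\sigma(x[\ell]) = 0$ so $f(x) = (0,x)$, matching the second branch. On the intermediate strip $1 < x[\ell] < 2$, $f$ is defined by linear interpolation, which is exactly what gives continuity at the boundary $x[\ell] = 1$ where the two pieces stated in the lemma would otherwise disagree.

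I should note one subtlety: the piecewise definition written in the lemma statement is literally discontinuous at $x[\ell]=1$, since $(x,1)\big|_{x[\ell]=1}$ and $(0,x)\big|_{x[\ell]=1}$ disagree in general. What the construction above actually delivers is a continuous $f$ that agrees with $(x,1)$ on $\{x[\ell]\leq 1\}$ and with $(0,x)$ on $\{x[\ell]\geq 2\}$; the ``otherwise'' clause is harmlessly relaxed to $x[\ell]\geq 2$. This is all that is needed for the applications in Theorems~\ref{thm: transformers can solve MaxIP and MinIP} and~\ref{thm: transformers can solve SD}, because the document rows $v_i \in \{0,1\}^\ell$ satisfy $v_i[\ell] \in \{0,1\}$ (hitting the first branch) while the end-token row has last coordinate $t+1 \geq 2$ (hitting the second branch), so $f$ always evaluates in one of the two pure regimes on the actual input matrix. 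No step here is an obstacle; the only thing to be careful about is writing down the gating $\sigma$ so that the resulting interpolation is genuinely continuous and preserves both branches on their respective half-lines.
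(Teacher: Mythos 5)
Your construction is essentially identical to the paper's: the paper defines the same piecewise-linear gate $g$ (with the trivial difference that it puts the boundary $t=2$ in the middle branch) and forms the same convex combination $f(x) = g(x[\ell])\cdot(x,1) + (1-g(x[\ell]))\cdot(0,x)$. Your additional observation that the lemma as literally written cannot hold for a continuous $f$ (the two branches disagree at $x[\ell]=1$), and that what the construction really delivers is $(0,x)$ only for $x[\ell]\geq 2$ --- which suffices because the end-token row has last coordinate $t+1\geq 2$ in \Cref{thm: transformers can solve MaxIP and MinIP} --- is correct and not acknowledged in the paper.
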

\begin{proof}
First we define a function $g:\mathbb{R} \rightarrow \mathbb{R}$ such that 
\[
g(x) = 
\begin{cases}
    1 &\textup{if } x \leq 1\\
    2-x &\textup{if } 1<x \leq 2\\
    0 &\textup{if } x > 2,
\end{cases}
\] and we also define $f_1,f_2:\mathbb{R}^\ell \rightarrow \mathbb{R}^{\ell+1}$ where
\[
f_1(x) = (x,1), f_2(x) = (0,x).
\] It is not hard to see that $g,f_1,f_2$ are all continuous functions, so we let
\[
f(x) = g(x[\ell])\cdot f_1(x)+(1-g(x[\ell]))\cdot f_2(x)
\] such that $f$ is also continuous. We can check that $f$ satisfies the requirement in the lemma statement.
\end{proof}

\begin{lemma}
\label{lem: first MLP for MSD and LSD}
There exists a continuous function $f:\mathbb{R}^{\ell} \rightarrow \mathbb{R}^{\ell+1}$ such that 
\[
f(x) = 
\begin{cases}
    (\frac{x}{\|x\|_1},1) &\textup{if } x[d] \leq 1\\
    (0,\frac{x}{\|x\|_1}) &\textup{otherwise.}
\end{cases}
\]
\end{lemma}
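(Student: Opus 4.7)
The plan is to follow exactly the template of Lemma~\ref{lem: first MLP}, swapping the identity map on the ``document'' branch for the $\ell_1$-normalization. The one subtlety to address is that the naive map $x\mapsto x/\|x\|_1$ is neither defined at $x=0$ nor continuously extendable there, so I would replace it by
\[
h(x) := \frac{x}{\max(\|x\|_1,\,1)},
\]
which is continuous on all of $\mathbb{R}^\ell$ and agrees with $x/\|x\|_1$ whenever $\|x\|_1\geq 1$. Every input actually fed into $f$ in Theorem~\ref{thm: transformers can solve SD} satisfies this lower bound: the document embeddings $v_i\in\{0,1\}^\ell$ are assumed non-zero, hence $\|v_i\|_1\geq 1$, and the end token $(0,\ldots,0,t+1)$ has $\|x\|_1=t+1\geq 1$ for $t\geq 0$.

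With $h$ in hand the construction is mechanical. I would reuse the continuous gate $g:\mathbb{R}\to[0,1]$ from Lemma~\ref{lem: first MLP} (so $g(y)=1$ for $y\leq 1$, linear on $[1,2]$, and $g(y)=0$ for $y\geq 2$), define the two branch maps $f_1(x):=(h(x),\,1)$ and $f_2(x):=(0,\,h(x))$, both continuous, and set
\[
f(x) := g(x[\ell])\,f_1(x) + \bigl(1-g(x[\ell])\bigr)\,f_2(x).
\]
As a sum/product of continuous functions $f$ is continuous on $\mathbb{R}^\ell$. Verifying the two cases is then immediate: when $x[\ell]\leq 1$ we have $g(x[\ell])=1$, so $f(x)=f_1(x)=(x/\|x\|_1,1)$ (using $\|x\|_1\geq 1$); when $x[\ell]\geq 2$ we have $g(x[\ell])=0$, so $f(x)=f_2(x)=(0,x/\|x\|_1)$. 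As in Lemma~\ref{lem: first MLP}, the narrow strip $1<x[\ell]<2$ is an interpolation region that is not entered by any input relevant to our application.

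The main obstacle is precisely the continuity of the normalization at the origin, which is what motivates the $\max$-regularization above; all remaining steps are completely parallel to the proof of Lemma~\ref{lem: first MLP}. The pointwise verifications at $x[\ell]\leq 1$ and $x[\ell]\geq 2$ are routine, and the resulting $f$ is a simple piecewise-linear continuous function of the kind easily realized by a small MLP in any of the standard formalizations referenced in the preliminaries.
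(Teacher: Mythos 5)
Your proof follows the same template as the paper's (reuse the gate $g$ from Lemma~\ref{lem: first MLP}, define two branch maps $f_1,f_2$, and take the convex combination $f = g\cdot f_1 + (1-g)\cdot f_2$), but you have caught a genuine gap in the paper's argument. The paper simply defines $f_1(x) = (x/\|x\|_1,1)$ and $f_2(x) = (0,x/\|x\|_1)$ and asserts they are ``continuous functions over $\mathbb{R}^{\ell}$,'' which is false: $x/\|x\|_1$ is undefined at $x=0$ and admits no continuous extension there (its value along the ray $t e_i$ as $t\to 0^+$ is $e_i$, which depends on $i$). Your $\max$-regularization $h(x) = x/\max(\|x\|_1,1)$ repairs this cleanly, yields a globally continuous piecewise-linear $f$, and agrees with $x/\|x\|_1$ on every input the transformer actually feeds it (the nonzero binary embeddings and the end token, all of which have $\|x\|_1 \geq 1$), which is exactly what Theorem~\ref{thm: transformers can solve SD} needs. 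The only residual caveat, which is really a defect of the lemma statement rather than of your proof, is that the lemma's displayed formula also references $x/\|x\|_1$ at $x=0$ (since $0[d]\leq 1$), so no function can literally satisfy it everywhere; you handle this correctly by restricting the pointwise claim to the inputs that matter.
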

\begin{proof}
Let $g$ be the same functions as in Lemma \ref{lem: first MLP} and $f_1,f_2:\mathbb{R}^{\ell} \rightarrow \mathbb{R}^{\ell+1}$ where
\[
f_1(x) = \Big(\frac{x}{\|x\|_1},1\Big), f_2(x) = \Big(0,\frac{x}{\|x\|_1}\Big).
\] $f_1,f_2$ are continuous function over $\mathbb{R}^{\ell}$, and therefore we let
\[
f(x) = g(x[\ell])\cdot f_1(x)+(1-g(x[\ell]))\cdot f_2(x)
\] such that $f$ is also continuous. We can check that $f$ satisfies the requirement in the lemma statement.
\end{proof}

\end{document}